
\documentclass{article}

\usepackage{microtype}
\usepackage{graphicx}
\usepackage{subfigure}
\usepackage{booktabs}
\usepackage{subfigure}
\usepackage{subcaption}

\usepackage{hyperref}



\usepackage[accepted]{icml2024}

\usepackage{amsmath}
\usepackage{amssymb}
\usepackage{mathtools}
\usepackage{amsthm}
\usepackage{booktabs}

\usepackage[capitalize,nameinlink]{cleveref}

\theoremstyle{plain}
\newtheorem{theorem}{Theorem}[section]

\newtheorem{lemma}[theorem]{Lemma}

\theoremstyle{definition}

\newtheorem{assumption}[theorem]{Assumption}
\theoremstyle{remark}
\newtheorem{remark}[theorem]{Remark}

\usepackage[textsize=tiny]{todonotes}

\icmltitlerunning{Scaling Laws for the Value of Individual Data Points in Machine Learning}
\usepackage{math_command}


\begin{document}

\twocolumn[
\icmltitle{Scaling Laws for the Value of Individual Data Points in Machine Learning}



\icmlsetsymbol{equal}{*}

\begin{icmlauthorlist}
\icmlauthor{Ian Covert}{xxx,equal}
\icmlauthor{Wenlong Ji}{xxx,equal}
\icmlauthor{Tatsunori Hashimoto}{xxx}
\icmlauthor{James Zou}{xxx}
\end{icmlauthorlist}

\icmlaffiliation{xxx}{Stanford University}


\icmlcorrespondingauthor{Ian Covert}{icovert@stanford.edu}

\icmlkeywords{Machine Learning, ICML}

\vskip 0.3in
]



\printAffiliationsAndNotice{}  

\begin{abstract}
    Recent works have shown that machine learning models improve at a predictable rate
    with the total amount of training data, leading to \textit{scaling laws} that describe the relationship between error and dataset size.
    These scaling laws can help design
    a model's training dataset, but they typically take an aggregate view of the data by only considering the dataset's size. We
    introduce a new perspective by investigating scaling behavior for the value of \textit{individual data points}:
    we find that a data point's
    contribution to model's performance shrinks predictably with the size of the
    dataset in a log-linear manner. Interestingly, there is significant variability in the scaling exponent among different data points, indicating that certain points are more valuable in small datasets while others
    are relatively more useful as a part of large datasets. We provide learning theory to support our scaling law, and we observe empirically that it holds across diverse model classes.
    We further propose a maximum likelihood estimator and an amortized estimator to efficiently learn the individualized scaling behaviors from a small number of noisy observations per data point.
    Using our
    estimators, we provide insights into factors that influence the scaling behavior of different data points. Finally, we demonstrate applications of the individualized scaling laws to data valuation and data subset selection.
    Overall, our work represents a first step towards understanding and utilizing scaling properties for the value of individual data points.
\end{abstract}

\section{Introduction}
\label{sec:intro}
Machine learning models for vision and language have improved dramatically in recent years \citep{radford2021learning, touvron2023llama, openai2023gpt, team2023gemini}, in part due to increasing model sizes, but also due to using larger amounts of high-quality training data \citep{gadre2023datacomp}. Recent research has found that increasing the amount of training data improves models at a predictable rate, leading to \textit{scaling laws} that describe the relationship between error and dataset size \citep{kaplan2020scaling}.
For example, \citet{hoffmann2022training} introduced the following scaling law to determine Chinchilla's compute-optimal training budget,
\begin{equation}
    \mathrm{error} \approx \epsilon + \frac{a}{p^\gamma} + \frac{b}{n^\lambda}, \label{eq:aggregate-scaling}
\end{equation}
where $\epsilon$ represents irreducible error, $p$ is the number of model parameters, $n$ is the number of training examples, and $(a, b, \nu, \lambda)$ are learned constants.
These scaling laws help inform how to trade off model and data size, but they take an \textit{aggregate} view of the dataset by not differentiating between training examples; this is limiting because certain data points are more useful than others, particularly for noisy web-scraped datasets.

In the current era of datasets aggregated from heterogeneous sources, it is important to understand how individual data points or data sources affect the behavior of model training. This can give practical guidance on what type of data to prioritize, especially as the dataset grows. It also provides more fundamental insights into how the impact of different data scales with data size---e.g., which points are useful in small datasets but whose value diminishes quickly as the data size increases, and which points are relatively more valuable in large datasets. 

Our work aims to take initial steps in this direction: motivated by these questions, we propose \textit{individualized data scaling laws}, which describe how the impact of data scales with the size of the dataset for each training example.
Our analysis focuses on a data point's marginal contribution to a trained model's performance, which previous work has noted shrinks with the size of the preceding dataset \citep{kwon2021beta}; we build on this by showing that the marginal contribution
shrinks at a reliable rate, and that this rate varies between data points. The scaling behavior is represented by a simple parametric form, which is inspired by existing aggregate scaling laws (see \Cref{sec:scaling}). We find support for this phenomenon in certain classical learning settings (e.g., linear regression), and we show that it holds empirically across several model types, including logistic regression, multi-layer perceptrons (MLPs) and support vector machines (SVMs).

Analyzing individual scaling laws can ultimately help improve the dataset, for example by identifying data points
that consistently degrade a model's performance, or by surfacing data points whose contributions remain large even as the dataset size grows.
Doing so requires fitting scaling parameters for every example in a dataset, which can be computationally costly;
for example, it would be intractable to precisely estimate each point's expected marginal contribution at a range of dataset sizes, even though this is a natural way to
observe the trend from the scaling law (see \Cref{fig:validation}). We therefore consider how to make the fitting approach efficient in \Cref{sec:estimation}, where we develop procedures that let us estimate the scaling parameters for entire datasets. These include (1)~a statistical approach to estimate the scaling law given a moderate number of sampled marginal contributions, and (2)~a neural estimator that amortizes the fitting process across all examples in a dataset.

We conduct experiments to test our approach on several datasets (including tabular, text and image data) and multiple model types. We verify that the scaling law accurately describes the mean contribution at each dataset size: for example, we find that our scaling law explains ${>}0.9$ of the variance in marginal contributions between data points (see \Cref{tab:global-r2}). We also test whether it can extrapolate beyond the range where it is fit, which would enable the scaling law to be learned
in a less costly dataset range before being used in a larger regime. Finally, we explore applications of the scaling laws to data subset selection, where we find that they identify useful
new data points that depend on the current dataset size; and we demonstrate their application to data valuation, which we show is closely related to our scaling analysis \citep{ghorbani2019data}.

\textbf{Our contributions.} The main contributions of this work are
the following:
(1)~we propose and find evidence for individualized data scaling laws, (2)~we show how to fit the scaling behavior
using a small number of noisy observations per data point, (3)~we provide qualitative understanding of factors that influence scaling behavior, and (4)~we demonstrate that individualized scaling laws can facilitate data valuation and data subset selection. Overall, the scaling behavior of individual data points provides a new tool for understanding and improving training data for machine learning models.

\textbf{Related work.} Scaling laws for deep learning
have become well known in recent years \citep{hestness2017deep, rosenfeld2019constructive, kaplan2020scaling, hoffmann2022training}. They serve several purposes, including reasoning about the trade-offs between increasing training data and model parameters \citep{kaplan2020scaling, hoffmann2022training}, predicting the performance of large-scale models \citep{cherti2023reproducible},
and comparing the performance of learning algorithms at manageable scales \citep{dubois2023evaluating}. 
The most similar works to ours are \citet{hashimoto2021model} and \citet{rolf2021representation}, which study how model performance scales when training with multiple data sources; our work instead takes the perspective of studying individual data points, which offers a more granular tool for analyzing the contents of 
a training dataset.

Separately, another line of work about
data valuation focuses
on the role of individual data points in improving the model's performance \citep{ghorbani2019data, kwon2021beta, wang2022data}. These methods typically score training examples based on their \textit{marginal contribution}---how much including them in the training data affects the model's accuracy---by averaging this across many datasets of different sizes. Such methods can be used to identify mislabeled data, filter for high-quality data, upweight helpful examples, and select promising new points for active learning \citep{ghorbani2019data, kwon2023data, jiang2023opendataval}.
Our work is similarly framed around each data point's marginal contribution, but we study how the loss improvement scales with the size of the dataset for each training example, and we find that the most helpful training examples can vary with the dataset size.

Aside from these methods focused on each example's marginal contributions, others analyze individual data points using different importance measures \citep{ilyas2022datamodels, park2023trak, just2023lava}. In particular, many works have explored extensions of the classic influence function to nonlinear models like neural networks \citep{cook1980characterizations, koh2017understanding, grosse2023studying, kwon2023datainf}.
\citet{ilyas2022datamodels} briefly discuss the effect of training dataset size on
example-to-example influence scores, 
but to our knowledge, our work is the first to thoroughly explore scaling behavior for individual data points.

\section{Individualized data scaling laws}
\label{sec:scaling}
We focus here on supervised learning problems, and
as a notational setup, we write individual data points as tuples $z = (x, y)$ where $x$ is an input and $y$ the response variable. We write datasets as $\mcD = \{z_i\}_{i = 1}^k$, and the datasets can have variable sizes that we denote by $|\mcD|$. When creating datasets of different sizes, we assume access to a single large training pool $\mcD_t$ from which the smaller datasets $\mcD$ can be sampled without replacement. We consider a learning algorithm that trains a model on a given dataset $\mcD$, which we denote by $f_{\mcD}$, and we consider several options for the model class (e.g., logistic regression, MLPs). Our analysis focuses on the population error $\mcL(f_\mcD)$, for example the cross-entropy loss, and we only consider the effects of the training data
(i.e., we do not vary the number of model parameters).

Recall the form of current aggregate scaling laws: if we ignore the number of model parameters,
several works are based on the same functional form
$\mcL(f_\mcD) \approx \epsilon + b |\mcD|^{-\lambda}$ \citep{rosenfeld2019constructive, kaplan2020scaling, hashimoto2021model, hoffmann2022training}.
We can view this as a claim about the expected loss given a dataset of size $k$, where datasets are sampled uniformly at random:
\begin{equation}
    \E_{|\mcD| = k}[\mcL(f_\mcD)] \approx \epsilon + \frac{b}{k^\lambda}. \label{eq:aggregate-expectation}
\end{equation}
The functional form in \cref{eq:aggregate-expectation} takes an aggregate view by only focusing on the dataset size $k$, but our goal is to analyze how the loss is affected by
individual data points. We therefore focus on each data point's marginal contribution, which is
defined as the performance difference between models trained with and without $z$.
We write the marginal contribution $\Delta(z, \mcD)$ as follows, where we expect $\Delta(z, \mcD) > 0$ in most cases for useful data points:
\begin{equation}
    \Delta(z, \mcD) = \mcL(f_\mcD) - \mcL(f_{\mcD \cup \{z\}}).
\end{equation}
This quantity is the basis of many data valuation methods, \citep{ghorbani2019data, ghorbani2020distributional, kwon2021beta, wang2022data}, and its expectation across $z$ is reflected via the derivative of current scaling laws like \cref{eq:aggregate-expectation}.\footnote{If the aggregate scaling law $\E_{|\mcD| = k}[\mcL(f_\mcD)] \approx \epsilon + \frac{b}{k^\lambda}$ holds, then we have $\E_{z, |\mcD| = k}[\Delta(z, \mcD)] \approx \frac{b\lambda}{k^{\lambda + 1}}$.}
Our goal here is to understand how marginal contributions scale
as a function of the dataset size $|\mcD|$. We expect them to shrink monotonically towards zero, but at a rate that may be specific to each data point.
We therefore posit the following scaling law for the expectation $\psi_k(z) \equiv \E_{|\mcD| = k}[\Delta(z, \mcD)]$ across datasets of size $k$:
\begin{equation}
    \psi_k(z) \approx \frac{c(z)}{k^{\alpha(z)}}. \label{eq:scaling-law}
\end{equation}
This shares a similar form with aggregate scaling laws, but there is no irreducible term because the contribution should shrink to zero, and we allow both the numerator $c(z)$ and exponent $\alpha(z)$ to depend on the data point.
In the remainder of this section, we show that this functional form holds in practice for several model classes (\Cref{sec:validation}), and we examine support for this parametric form from classical learning settings (\Cref{sec:classical}).

\subsection{Empirical validation of the scaling law} \label{sec:validation}

To test if the scaling law in \cref{eq:scaling-law} holds in practice, we first run a simple experiment to visualize the hypothesized behavior. Our approach is to select a range of dataset sizes $k$ and estimate each expectation $\psi_k(z)$ by averaging a large number of sampled contributions
$\Delta(z, \mcD)$.
We can then plot the results to observe the following linear trend in the log-transformed means and cardinalities $k$,
\begin{equation}
    \log |\psi_k(z)| \approx \log |c(z)| - \alpha(z) \log (k), \label{eq:log-scaling}
\end{equation}
where we use the absolute values $|\psi_k(z)|$ and $|c(z)|$ because certain examples have negative contributions. We conduct this experiment for 1000 data points from the IMDB movie review dataset \citep{maas2011learning}, where we use 10 log-spaced cardinalities between $k = 100$ and $k = 1000$, and we average 1000 samples $\Delta(z, \mcD)$ to estimate each $\psi_k(z)$. 

\begin{figure}[t]
    \centering
    \includegraphics[width=\linewidth,trim={0.2cm 1.5cm 1.5cm 2cm},clip]{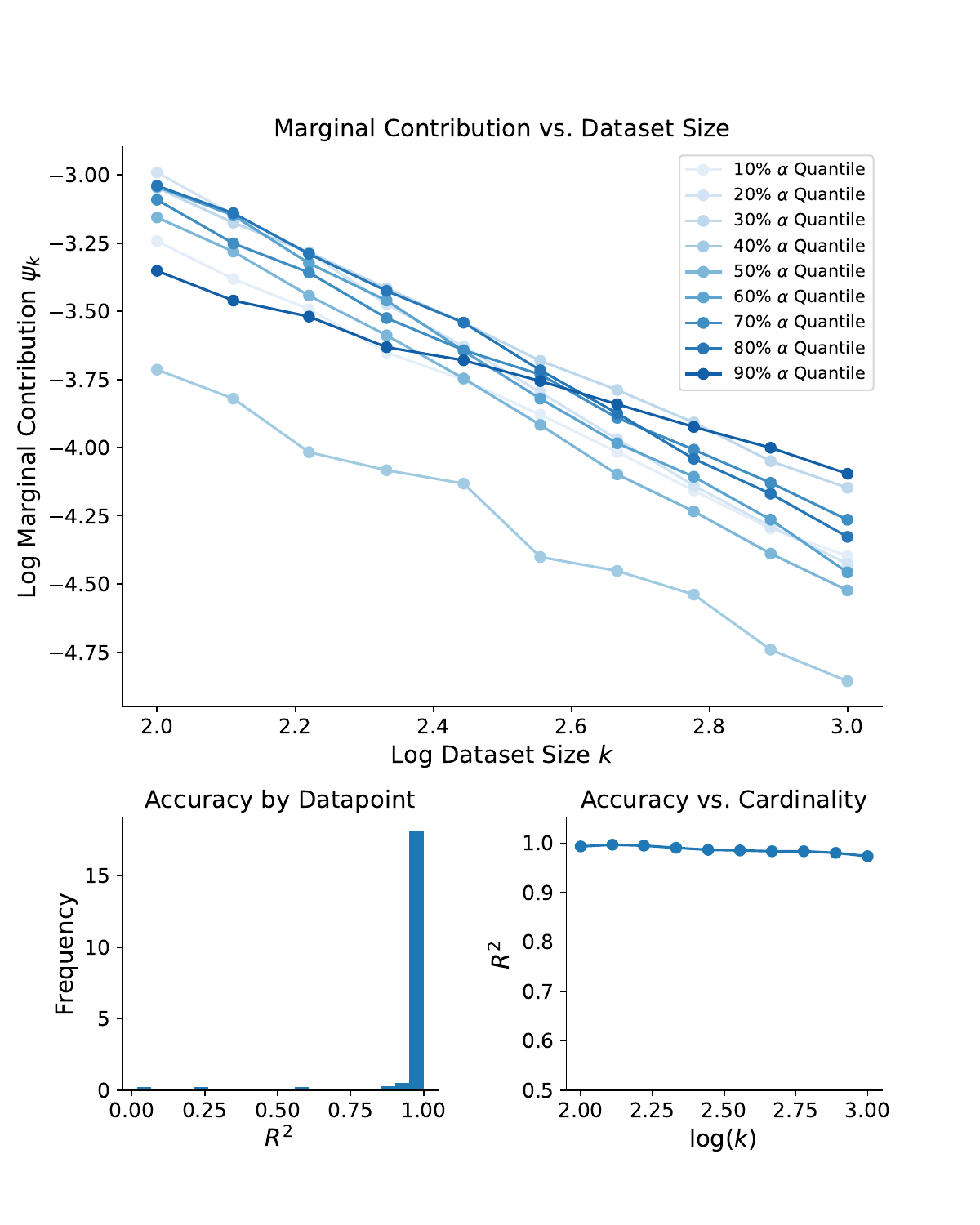}
    \vskip -0.2cm
    \caption{\textbf{Individualized scaling laws for logistic regression trained on the IMDB dataset}. \textbf{Top:} Marginal contribution vs. the dataset size in log-scale for several data points with a range of scaling exponents $\alpha(z)$. \textbf{Left:} Histogram of $R^2$ scores for linear trend lines fit to each data point in the log-scale. \textbf{Right:} Plot of the $R^2$ score from our scaling law predictions at each cardinality, measured across data points. We achieve an overall $R^2 = 0.987$ for the predictions across all points and dataset sizes.}
    \label{fig:validation}
\end{figure}

\begin{table}[hb!]
    \centering
    \caption{\textbf{Overall $R^2$ score of predicting expected contributions}. The overall $R^2$ score is measured for linear trends fit in log-space after estimating $\psi_k(z)$ for a range of $k$ values (the same estimation approach used in \Cref{fig:validation}), and it is calculated across data points and cardinalities.}
    \label{tab:global-r2}
    \vspace{0.2cm}
    \begin{small}
    \begin{tabular}{cccc}
    \toprule
         & Logistic Regression & MLP & SVM \\
    \midrule
    MiniBooNE & 0.993 & 0.963 & 0.929  \\
    CIFAR10 & 0.944 & 0.942 & 0.963  \\
    IMDB & 0.985 & 0.956 &0.946 \\
    \bottomrule
    \end{tabular}
    \end{small}
\end{table}

\Cref{fig:validation} shows results for logistic regression models trained on frozen BERT embeddings \citep{devlin2018bert}. The slope of each line represents the scaling exponent $\alpha(z)$, and the intercept represents the coefficient magnitude $\log |c(z)|$. We observe variability in the estimated slopes, and \Cref{fig:validation} shows several points corresponding to different quantiles of the fitted
scaling exponent $\alpha(z)$.

To test the scaling law's accuracy, we calculate the goodness-of-fit for each data point's linear trend via the $R^2$ score, which we fit with ordinary least squares. \Cref{fig:validation} shows a histogram of these scores, which confirms that the majority of points follow clean
log-linear trends. A minority of points do not, but we find that these outliers generally have very small contributions $\log |\psi_k(z)|$ across $k$ values, which may indicate numerical or optimization noise as the loss differences approach zero (see \Cref{fig:r2_psi}).
To account for the variability between points, we also calculate the $R^2$ from the linear trends within each cardinality, where we see that the score is ${\approx}1.0$ across $k$ values (\Cref{fig:validation} bottom right).
Overall, many data points are accurately described by log-linear trends, but we observe some deviations as $k$ grows and the loss differences shrink towards zero;
such issues may be due to noise, or cases where the scaling law does not perfectly fit the data, but it is accurate enough in most cases to
provide useful insights.

These observations provide initial evidence for the scaling behavior hypothesized in \cref{eq:scaling-law} and motivate the remainder of our analysis.
Because our fitting approach in \Cref{fig:validation} requires a large number of samples for each $z$, we develop a more efficient approach to fit the scaling law in \Cref{sec:estimation}.
In our more comprehensive experiments in \Cref{sec:experiments}, we perform similar analysis of the scaling law's fit using multiple datasets and model classes, and we also discuss implications for data subset selection and data valuation. As a preview, \Cref{tab:global-r2} shows that the scaling law achieves a high overall $R^2$ score
for three datasets and model classes.

\subsection{Theoretical support for the scaling law} \label{sec:classical}
 
We mathematically analyze linear regression and more general M-estimators to provide theoretical support for the existence of the scaling law and give insight into factors that can affect the scaling parameters. The theorems and details are provided in \Cref{app:theory}, but we summarize the results here. Overall, we find general confirmation for a reliable decay rate in the marginal contributions, and we identify intuitive factors that influence the magnitude of the contribution: for example, for linear regression we find that the relative noise of the response variable is relevant, as is the input vector's leverage score. However, our theory does not make precise predictions about the scaling behavior, because it requires strong regularity conditions and has error terms whose expectations may be non-negligible. These limitations
may explain why our empirical results show heterogeneity in the scaling exponent $\alpha(z)$, although it is often concentrated in a range from $[1, 1.5]$ that is consistent with the loose predictions from our theoretical results. We remark that such partial mismatches between theory and practice are also widely observed with aggregate scaling laws \citep{kaplan2020scaling, hutter2021learning}.

\section{Efficient scaling law estimation}
\label{sec:estimation}
A key challenge in using individualized scaling laws is fitting them to every example in a dataset. Recall that \Cref{sec:validation} used a large number of samples $\Delta(z, \mcD)$ at each cardinality to estimate the expectation $\psi_k(z)$.
This lets us visualize
a linear trend in the log-log space, but the procedure is intractable to repeat for every example in a large dataset.
To address this issue, we now describe two approaches to more efficiently estimate individualized scaling laws.

\subsection{Maximum likelihood-based estimation}
\label{sec: likelihood estimation}

Our goal is to fit the scaling parameters $c(z), \alpha(z)$ using a limited number of marginal contribution samples $\Delta(z, \mcD)$, which is challenging because the scaling law in \cref{eq:scaling-law} is defined via the expectation $\psi_k(z)$ at each cardinality. We find that there is significant variability in the marginal contributions at each $k$; in fact, it is common to observe a mix of positive and negative contributions for a single example $z$ (see \Cref{fig:parametric-example}). This means that whereas our procedure in \Cref{sec:scaling} relied on noiseless log-transformed mean contributions $\psi_k(z)$, a more efficient procedure must tolerate a noisy view of the scaling trend.

We therefore propose fitting the scaling law in the original loss space, and in a way that carefully handles noise in the sampled contributions, which tends to be largest at smaller cardinalities $k$. We frame this as a statistical inference problem by modeling the marginal contributions at each $k$ with the following Gaussian distribution:
\begin{equation}
    \Delta(z, \mcD) \mid |\mcD| = k \sim \mcN\left(\frac{c(z)}{k^{\alpha(z)}}, \frac{\sigma^2(z)}{k^{\beta(z)}}\right). \label{eq:gaussian}
\end{equation}
Notice that the mean for each $k$ is governed by our scaling law from \cref{eq:scaling-law}, and the variance is governed by a separate scaling law with parameters $\sigma(z)$ and $\beta(z)$. The variance in marginal contributions also naturally shrinks with $k$ \citep{kwon2021beta}, and we observe
that the variance follows this general
scaling behavior (see \Cref{fig:r2 var} in \Cref{appendix: evidence}). Fitting the variance parameter $\sigma(z)$ and exponent $\beta(z) > 0$ lets us capture the natural reduction in variability as $k$ grows, and it encourages the learned scaling law to tolerate larger deviations for small $k$ values. This is essential for correctly modeling the mean-related parameters $c(z), \alpha(z)$, but we do not otherwise use the variance-related parameters.\footnote{When fitting their scaling law, \citet{hoffmann2022training} handled outliers by using a Huber loss and downweighting points in the low-compute regime. Directly modeling the variance is an alternative approach that we have not seen in the scaling law literature.}

Based on this statistical model, we can fit the parameters by obtaining a set of samples $\Delta(z, \mcD_i)$ for $i = 1, \ldots, m$ and then minimizing the negative log-likelihood as follows:
\begin{equation}
    \argmin_{c, \alpha, \sigma, \beta} \; \frac{1}{m} \sum_{i = 1}^m \mathrm{NLL}(\Delta(z, \mcD_i), |\mcD_i|; c, \alpha, \sigma, \beta). \label{eq:gaussian-objective}
\end{equation}
The optimization problem is non-convex, but we find that it can be effectively optimized with Adam \citep{kingma2014adam} using a moderate number of gradient steps; our exact procedure is described in \Cref{app:parametric-estimator}, and it involves using analytic solutions for $c, \sigma$ to only take gradient steps on $\alpha, \beta$. An example of the fitting result is shown in \Cref{fig:parametric-example}, where we see that the learned curve correctly allows larger deviations at smaller $k$ values and fits more precisely when $k$ is large. Our experiments use up to $m = 1000$ samples per data point for the most accurate results, which is an order of magnitude fewer samples than the validation in \Cref{sec:scaling} (which used $1000$ samples \textit{per cardinality}). Our experiments also test the estimator's convergence and show
strong results with as few as $m = 100$ samples (see \Cref{sec:experiments}), which provides a $100\times$ speedup.

\begin{figure}[t]
    \centering
    \includegraphics[width=\linewidth,trim={0.0cm 0.5cm 0.0cm 0.4cm},clip]{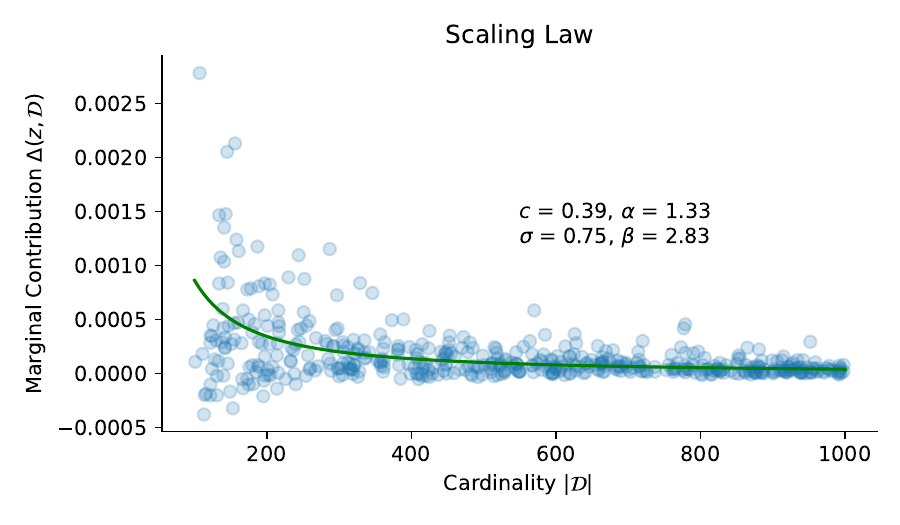}
    \vskip -0.2cm
    \caption{\textbf{Likelihood-based scaling law estimator.} The estimator is fit for a single example from the IMDB dataset, and the scaling parameters are fit using $m = 100$ samples.}
    \label{fig:parametric-example}
    \vskip -0.1cm
\end{figure}

\subsection{Amortized scaling law estimation} \label{sec:amortization}

The likelihood-based approach is
a more efficient way to fit individualized scaling laws, but the number of samples $m$ required for each example $z$ is still moderately large (e.g., $m = 100$). The key issue is that each data point is treated independently, and therefore requires its own set of samples $\Delta(z, \mcD)$ to fit the scaling curve. We now consider an alternative approach where we share information across examples $z$ to use fewer samples per data point, and in the process jointly fit the scaling parameters $c(z), \alpha(z)$ for an entire dataset.

Our approach
is to replace the objective from \cref{eq:gaussian-objective}, which optimizes over the scaling parameters for a single $z$,
with a combined objective parameterized by a neural network.
We use a network $g(z; \theta) \in \R^4$ that predicts the four parameters for the Gaussian distribution in \cref{eq:gaussian}: the network must receive only the data point $z = (x, y)$ as an input, which we implement by passing $x$ to a standard architecture (e.g., a MLP), predicting scaling parameters for all classes, and using only those for the relevant class $y$.
We then train the network using the following loss function:
\begin{equation}
    \min_{\theta} \; \E_{z, \mcD} \Big[\mathrm{NLL}\big(\Delta(z, \mcD), |\mcD|; g(z; \theta)\big)\Big].
\end{equation}
Like the previous objective, this is optimized using a set of samples $\Delta(z, \mcD)$, but
this version is equivalent to solving \cref{eq:gaussian-objective} simultaneously for all data points with parameters
predicted by the network $g(z; \theta)$. This general approach is known as \textit{amortized optimization} and is used for a range of machine learning tasks \citep{amos2023tutorial}, but not previously for estimating scaling laws. We can in principle train the network with as few as $m = 1$ contributions per data point,
but in our experiments we construct a dataset of relatively few samples $\Delta(z, \mcD)$ per data point (e.g., $m = 10$), and we train
until the loss stops improving on a held-out validation set.

\section{Experiments}
\label{sec:experiments}
\subsection{Evidence for individualized data scaling laws}
\label{sec: evidence}

We begin our more comprehensive experiments by providing further empirical evidence to support the parametric scaling law in \cref{eq:scaling-law}.\footnote{Code for our experiments is available at \url{https://github.com/iancovert/data-scaling}}
We focus on three model classes: logistic regression, SVMs and MLPs (specifically, two-layer ReLU networks).
We evaluate these models on three different datasets: MiniBooNE \cite{roe2005boosted}, CIFAR-10 \cite{krizhevsky2009learning}, and IMDB movie reviews \cite{maas2011learning}. To accelerate training and avoid underfitting,
we use frozen ResNet-50 \cite{he2016deep} and BERT \cite{devlin2018bert} pre-trained embeddings for CIFAR-10 and IMDB, respectively. Full details for the experiments are provided in \Cref{appendix: evidence}.

For each setting mentioned above, we evaluate the model performance by computing the cross-entropy loss on a test dataset of size 1000. Similar to our analysis in \Cref{sec:validation}, we first test the accuracy of log-linear trendlines by averaging many samples $\Delta(z, \mcD)$ at a range of log-spaced cardinalities. For logistic regression, we use 1000 data points and 1000 samples per $k$ value. For SVMs and MLPs, due to the higher variance in marginal contributions, we use 200 data points and 5000 samples per dataset size $k$.

Following \Cref{fig:validation},
we show the $R^2$ from lines fit to each point, and we observe across settings that the $R^2$ scores are generally very close to 1. \Cref{fig: r2} shows these results via histograms for each setting. The mass is most concentrated towards 1 for logistic regression, whereas MLPs and SVMs have more points with sub-optimal fits. We attribute this to worse estimation of the mean at each cardinality, and a similar issue mentioned in \Cref{sec:validation} with instability as the loss differences approach zero. The $R^2$ across data points and cardinalities is reported in \Cref{tab:global-r2}, which reveals that the scaling law overall captures nearly all the variability in expected contributions at each dataset size, despite the existence of some poorly fitting data points.

\begin{figure}[t]
    \centering
    \includegraphics[width=\linewidth]{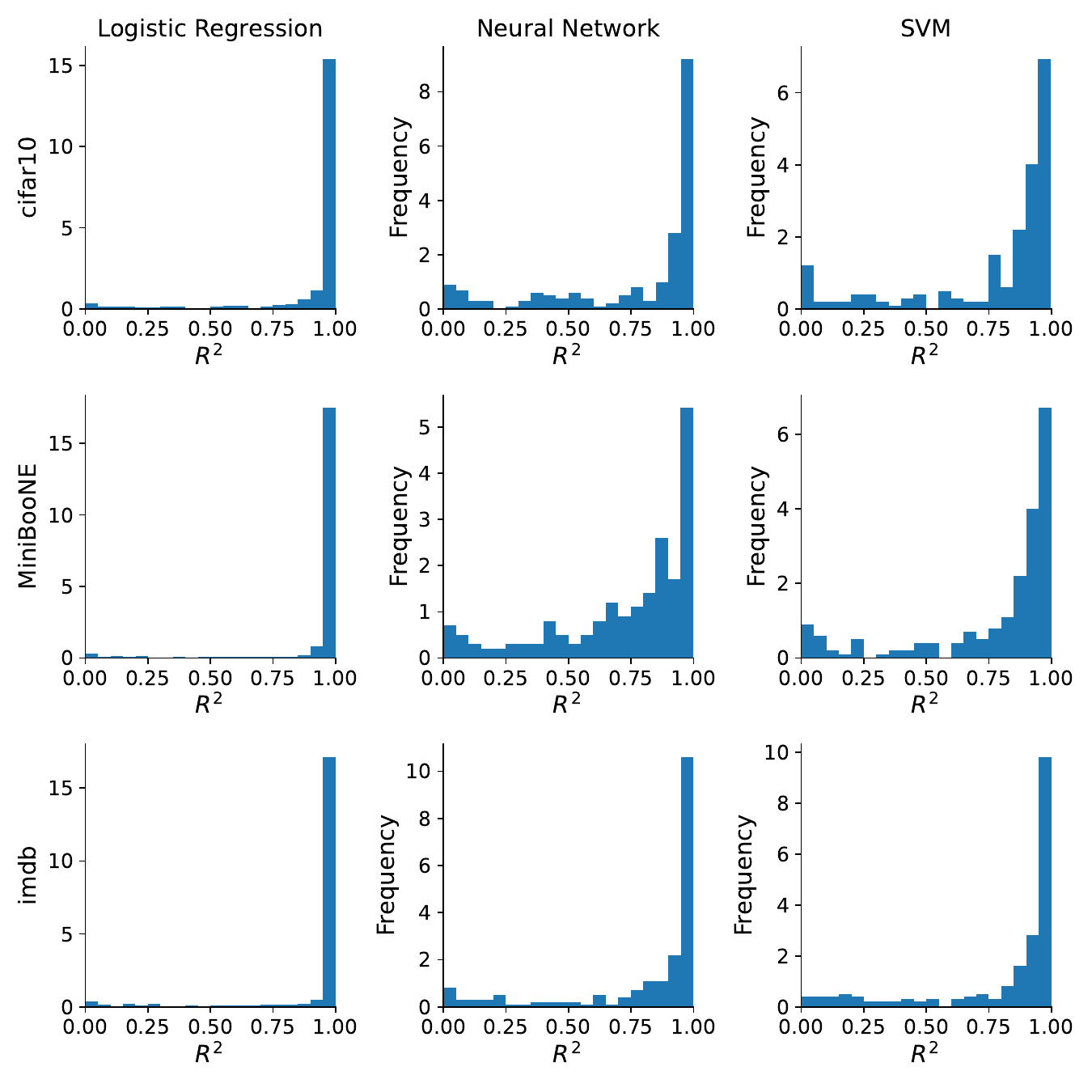}
    \vskip -0.3cm
    \caption{\textbf{Histogram of $R^2$ score when fitting the scaling law.} Similar to \Cref{fig:validation}, we find that linear trends in log-space achieve high $R^2$ scores for most data points, which supports the parametric form in \cref{eq:scaling-law}.
    }
    \label{fig: r2}
    \vskip -0.3cm
\end{figure}

Next, we examine the distribution of estimated parameters ${\alpha}(z)$. Analysis of ${c}(z)$ is deferred to \Cref{appendix: experiments}. Histograms of the fitted values are shown in \Cref{fig: alpha}, where we see that the distribution of ${\alpha}(z)$ has a mode between 1 and 1.5, which coincides with the range seen in our theoretical results (see \Cref{app:theory}).
The heterogeneity in $\alpha(z)$ is important, because it suggests that
different data points have
different decay rates for their marginal contributions. A data point with a higher $\alpha(z)$ value may have a larger contribution in a small-sample regime compared with other data points, but due to its fast decay rate, it could have a much smaller contribution in a large-sample regime. We illustrate this phenomenon in \Cref{fig: lines}: for each setting, we select the points with $\alpha(z)$ in 20\%, 40\%, 60\%, and 80\% quantiles among all points being evaluated, and we show their marginal contribution against the size of the preceding dataset. The results illustrate that there exist many lines that cross one another, which supports the idea that the most valuable data points depend on the dataset size.
In \Cref{sec: application subset}, we test this insight in experiments where we add points to an existing dataset based the scaling law's predictions at different $k$ values.

\begin{figure}[ht]
    \centering
    \includegraphics[width=\linewidth]{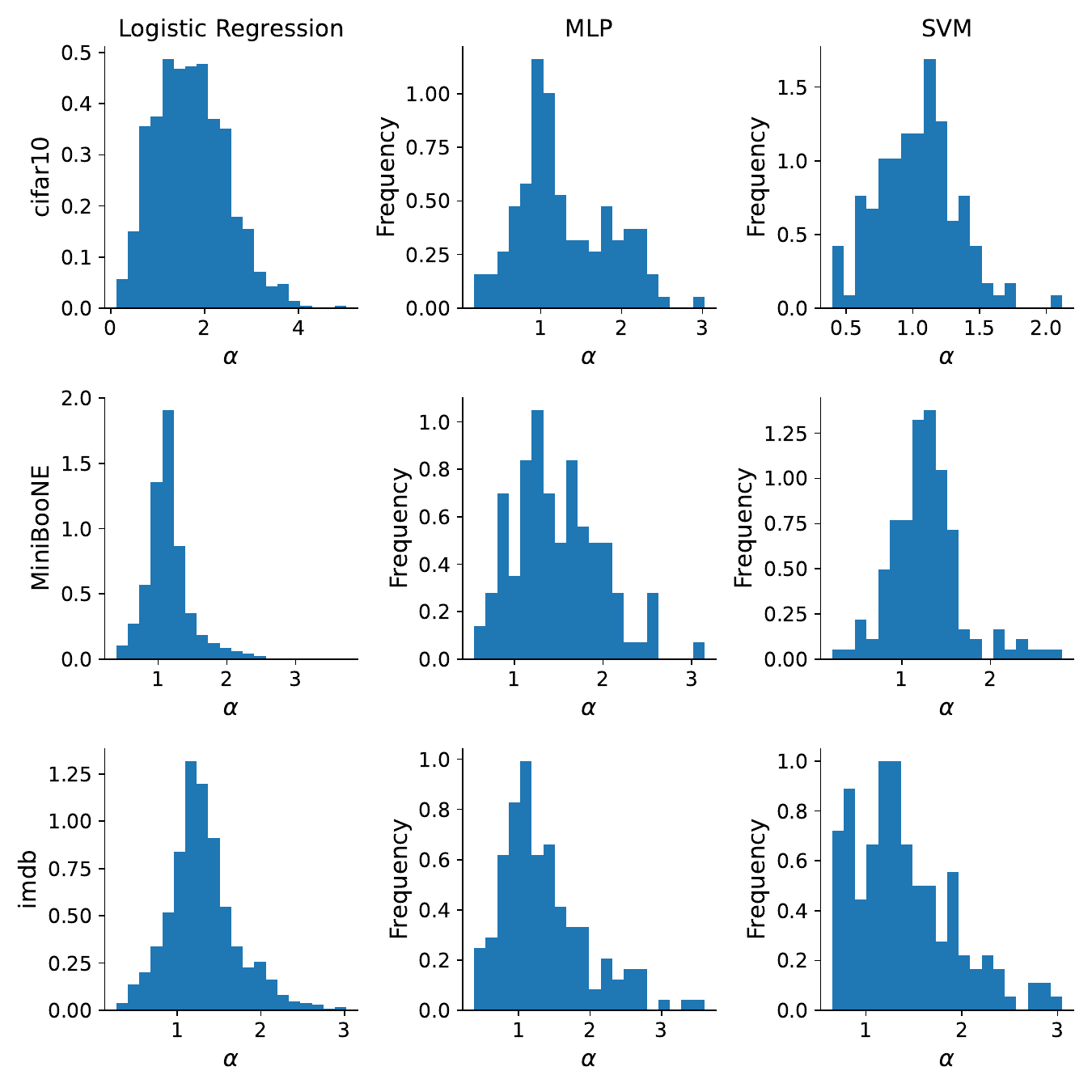}
    \vskip -0.3cm
    \caption{\textbf{Histogram of estimated $\alpha(z)$}. The estimated values have a mode between 1 and 1.5 and exhibit significant heterogeneity. We exclude points with $R^2<0.8$ to ensure the estimated $\alpha(z)$ values are reliably estimated.}
    \label{fig: alpha}
\end{figure}

\begin{figure}[ht]
    \centering
    \subfigure[CIFAR10 with MLP]{\includegraphics[width=0.49\linewidth]{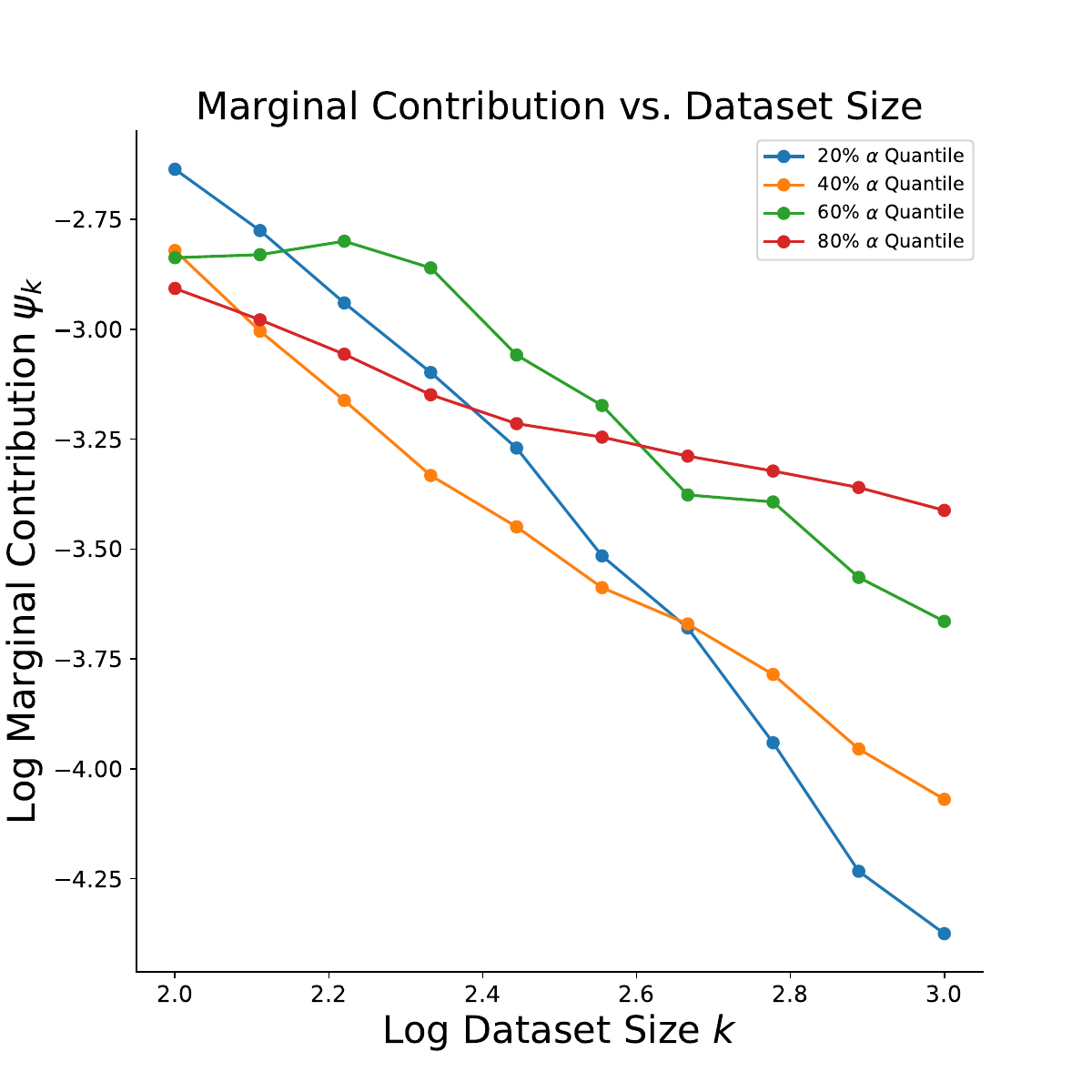}}
    \subfigure[IMDB with Log Reg]{\includegraphics[width=0.49\linewidth]{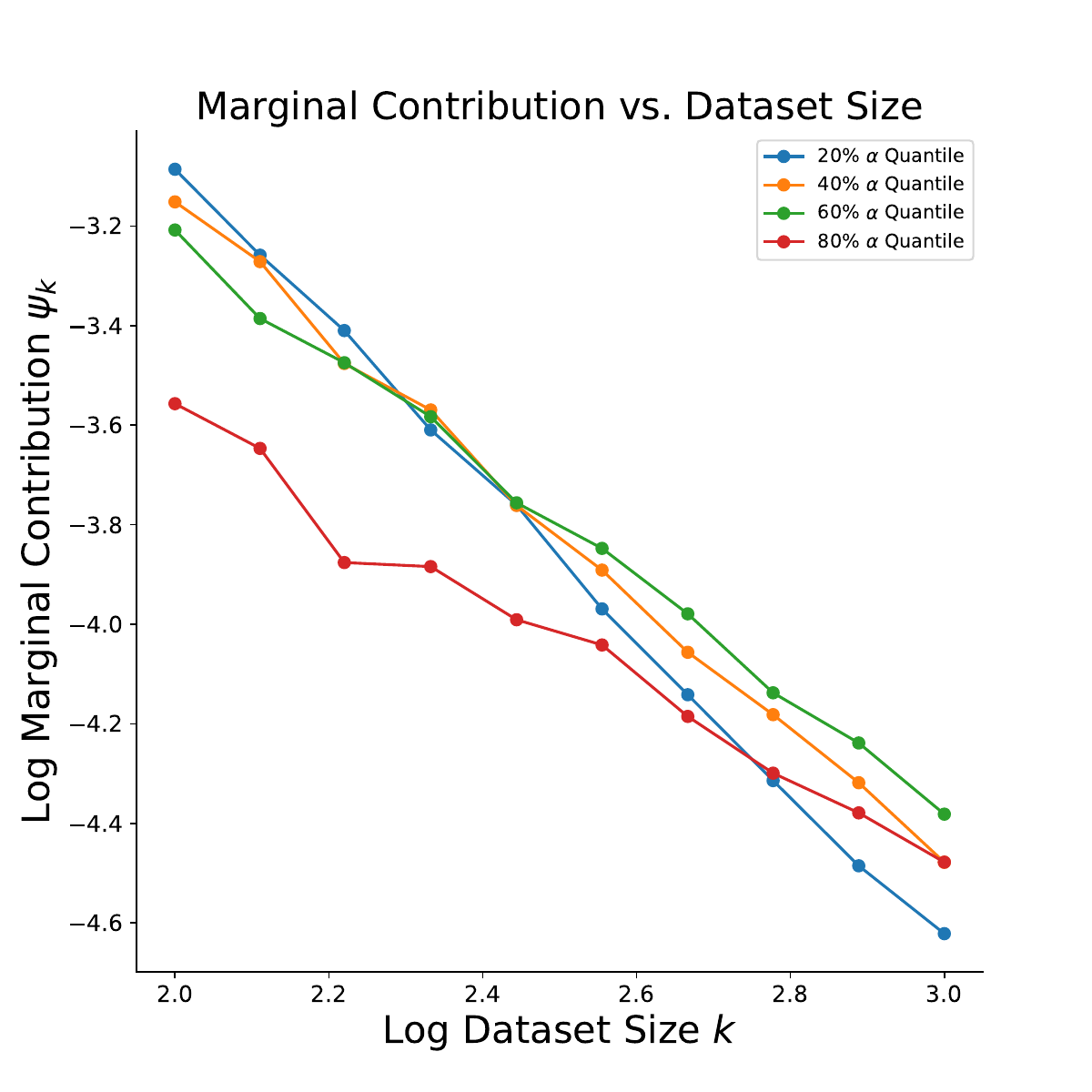}}
    \caption{\textbf{Marginal contribution for points with different $\alpha(z)$}. Similar to \Cref{fig:validation}, we plot the expected contribution $\log |\psi_k(z)|$ against the dataset size $\log k$.
    Lines with different $\alpha(z)$ cross one another, indicating that the ranking of valuable points depends on the dataset size $k$.
    }
    \label{fig: lines}
\end{figure}

A natural question to ask is why different points have different scaling exponents $\alpha(z)$. A comprehensive
answer to this question is beyond the scope of our work, but we provide some preliminary ideas.
In \Cref{fig: dist}, we show the relationship between the estimated $\alpha(z)$ and the distance to the decision boundary for logistic regression models with two binary classification datasets. There is a strong correlation between
$\alpha(z)$ and the distance to the decision boundary: closer points have lower $\alpha(z)$ and further points have higher $\alpha(z)$. A possible explanation is that
when the size of the preceding dataset is small, every point will have a significant contribution to learning the decision boundary
in a high-dimensional space. However, when the
dataset is large, the logistic regression model may have an accurate decision boundary, and only nearby points contribute by helping better calibrate the magnitude of the normal vector; these points may be those with a slower decay rate.
However, we also remark that $\alpha(z)$ is not completely determined by this factor, and that there is significant heterogeneity very near the decision boundary, so there are other factors at play.
We leave a more thorough analysis of this behavior as a future research direction.

Finally, we also examine the similarity between estimated scaling parameters for different model types. \Cref{tab:alpha-corr} shows the Pearson correlation between $\alpha(z)$ estimates for each pair of models, and we observe a positive correlation for two out of three datasets, CIFAR-10 and IMDB, but relatively weak correlations for MiniBooNE. The degree of correlation between models is similar for the estimated $c(z)$ values. The deviations between models may be attributed to the differing geometry of each function class, where different points can be more influential depending on the shape of the decision boundary. Again, we leave a more complete characterization of this finding to future work.

\begin{table}[h!]
    \centering
    \caption{\textbf{Correlation between estimated $\alpha(z)$ for different models.} The Pearson correlation between estimated scaling exponents shows weak positive similarity for CIFAR-10 and IMDB.}
    \vspace{0.2cm}
    \begin{small}
    \begin{tabular}{cccc}
         \toprule
          & LogReg/MLP & LogReg/SVM & SVM/MLP \\
         \midrule
         MiniBooNE & -0.01 & 0.18 & -0.04 \\
         CIFAR-10  & 0.38 & 0.26 & 0.34 \\
         IMDB  & 0.49 & 0.50 & 0.27 \\
         \bottomrule
    \end{tabular}
    \end{small}
    \label{tab:alpha-corr}
\end{table}

\begin{figure}[ht]
    \centering
    \subfigure[MiniBooNE]{\includegraphics[width=0.49\linewidth]{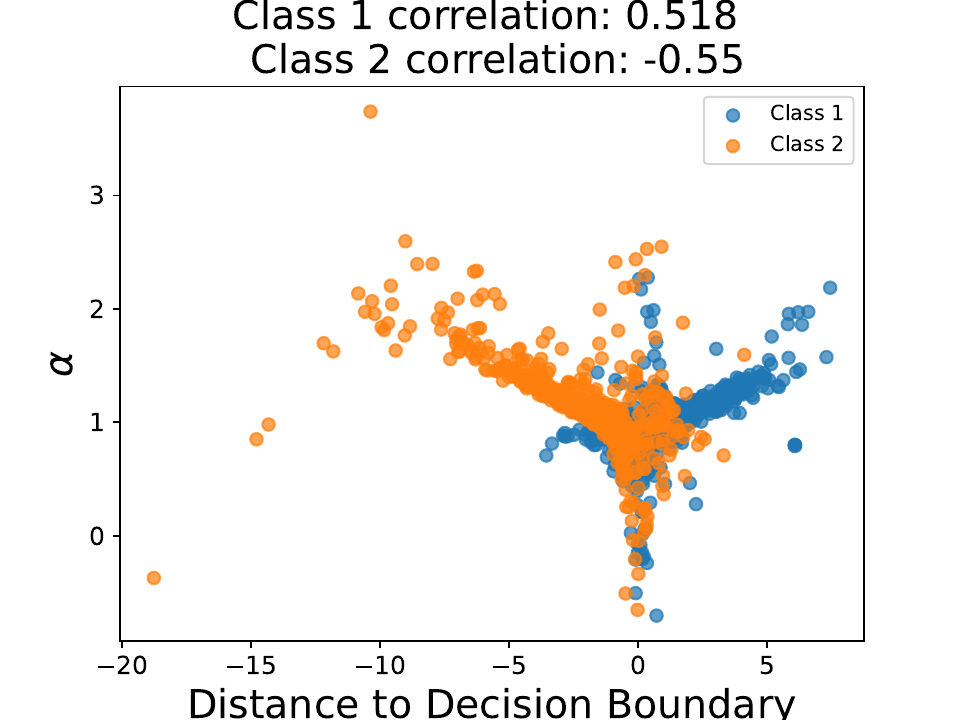}}
    \subfigure[IMDB]{\includegraphics[width=0.49\linewidth]{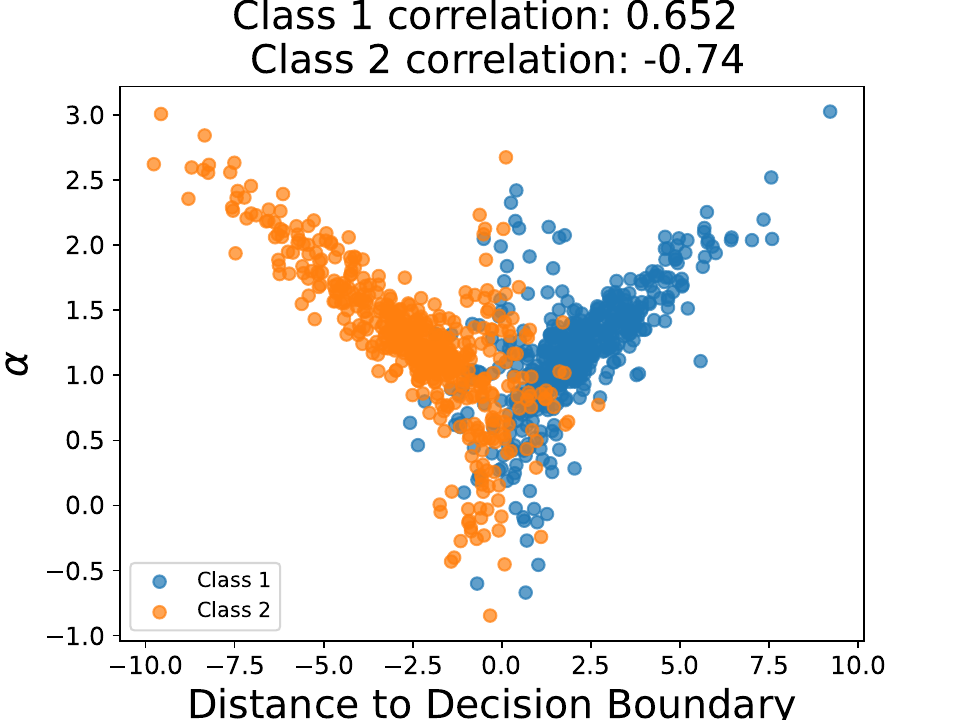}}
    \caption{\textbf{Distance to decision boundary vs. scaling exponent for logistic regression model.} We fit a logistic regression model on all points being evaluated and then compute the distance of each example to the decision boundary. The sign of distance is kept to distinguish points from each class. There is a strong correlation between distances and the estimated $\alpha$ values.}
    \label{fig: dist}
\end{figure}

\subsection{Scaling law estimation accuracy}
\label{sec:estimation-accuracy}

We now move to testing our more efficient scaling law estimators from \Cref{sec:estimation}. In evaluating our proposed approaches,
we use metrics based on how accurately we predict the marginal contributions at each dataset size. For example, \Cref{fig:estimator-scatter} shows that for IMDB with logistic regression, we can accurately predict the expectation $\psi_k(z)$ at multiple dataset sizes from $k = 100$ to $k = 1000$.
For a more systematic evaluation, \Cref{fig:estimator-accuracy} shows the accuracy of our scaling law predictions across dataset sizes for both versions of our likelihood-based estimator,
and when fitting with different numbers of samples. Our scaling laws are all fit in the range $k \in [100, 1000]$, so we mark the boundary where our scaling law begins extrapolating to larger ranges. An expanded version of these results is shown in \Cref{app:estimator-accuracy}, where we find that the $R^2$ score drops when we extrapolate beyond $k = 2500$, but the correlation and rank correlation
with the true expectation remain high.

\begin{figure}[h]
    \centering
    \includegraphics[width=0.8\linewidth,trim={13.0cm 0.0cm 13.0cm 0.0cm},clip]{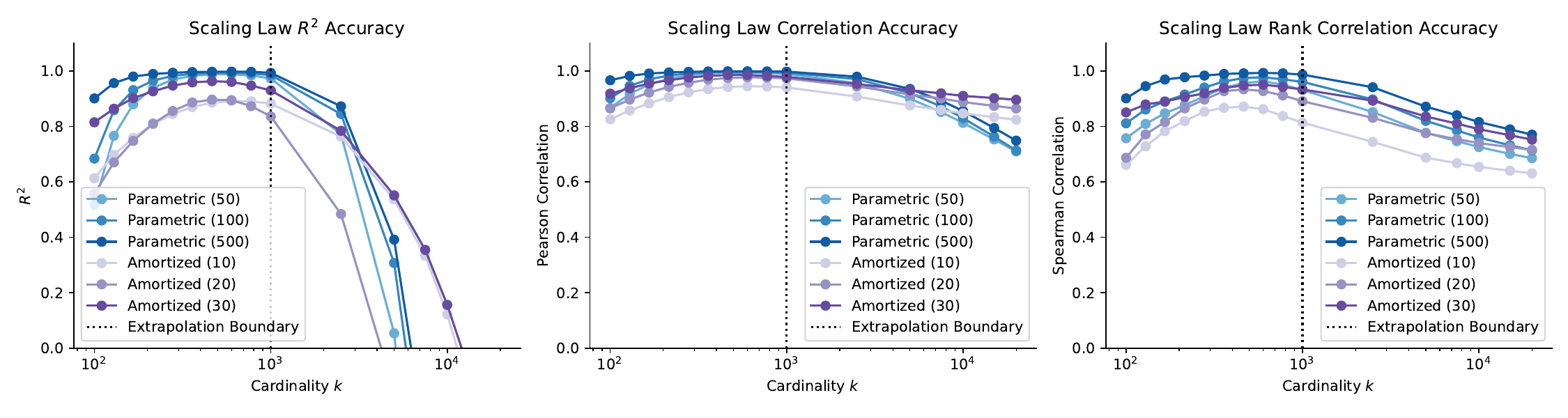}
    \vskip -0.4cm
    \caption{\textbf{Scaling law estimation accuracy at multiple cardinalities for IMDB with logistic regression.} We evaluate the scaling law's predictions at cardinalities up to an order of magnitude larger than where it is fit, according to the Pearson correlation with the true expectation $\psi_k(z)$ for each $k$. The numbers in parentheses are the number of samples used to fit the estimator.}
    \label{fig:estimator-accuracy}
\end{figure}

The likelihood-based estimator improves as we fit with more samples, and it provides the highest accuracy across all $k$ value when fit with 500 samples. The amortized estimator is relatively accurate given the small number of samples. The accuracy of all the estimators are consistently high in the fitting range $k \in [100, 1000]$, although the performance suffers slightly at the dataset sizes with
the noisiest samples.
Interestingly, we see that as we extrapolate to datasets an order of magnitude larger than our fitting range, the amortized estimator tends to be more accurate than the parametric estimator even though it is fit with
many fewer samples.

\subsection{Application to data valuation}
\label{sec:valuation}

Next, we consider the application of our scaling laws to data valuation.
Data valuation is the most direct application of our scaling analysis, because these methods are typically based on a weighted average of the expectations $\psi_k(z)$ at different $k$ values \citep{ghorbani2019data, kwon2021beta, wang2022data}, which means we can estimate the scores using our scaling parameters $c(z), \alpha(z)$. In fact, our approach can be viewed as a version of a stratified data valuation estimator \citep{wu2023variance} with the mean at each $k$ estimated via the shared set of scaling parameters.
Among the various available methods, we focus on Distributional Data Shapley \citep{ghorbani2020distributional}, which uses a uniform average across cardinalities. We measure the estimation accuracy relative to ground truth values computed using a large number of sampled marginal contributions, and we compare the accuracy to a standard Monte Carlo estimator (see a more thorough discussion in \Cref{app:valuation}). 
Our goal is to show that our estimators converge to the same value as the conventional estimator, which attests to the scaling law accurately capturing the decay in marginal contributions; and we hope to observe that one or both of our estimators can outperform the standard
one when using a small number of marginal contribution samples.

\begin{table}[t]
    \centering
    \caption{\textbf{Data valuation accuracy with 10 samples per data point.} We report the Pearson correlation of the estimates with ground truth values. Note that the non-amortized likelihood estimator can be unreliable with insufficient samples.}
    \vspace{0.2cm}
    \begin{small}
    \begin{tabular}{cccc}
         \toprule
         Method & MiniBooNE & CIFAR-10 & IMDB \\
         \midrule
         Monte Carlo & 0.967 & 0.857 & 0.916 \\
         Likelihood  & 0.858 & 0.077 & 0.342 \\
         Amortized   & 0.994 & 0.893 & 0.930 \\
         \bottomrule
    \end{tabular}
    \end{small}
    \label{tab:valuation-small}
    \vspace{-0.2cm}
\end{table}

\begin{table}[t!]
    \centering
    \caption{\textbf{Data valuation accuracy with 50 samples per data point.} We report the Pearson correlation of the estimates with ground truth values. All the estimators are quite accurate.}
    \vspace{0.2cm}
    \begin{small}
    \begin{tabular}{cccc}
         \toprule
         Method & MiniBooNE & CIFAR-10 & IMDB \\
         \midrule
         Monte Carlo & 0.988 & 0.965 & 0.981 \\
         Likelihood  & 0.995 & 0.936 & 0.985 \\
         Amortized   & 0.996 & 0.931 & 0.986 \\
         \bottomrule
    \end{tabular}
    \end{small}
    \label{tab:valuation-large}
\end{table}

Our full results are in \Cref{app:valuation}, but a summary is shown in \Cref{tab:valuation-small} and \Cref{tab:valuation-large} with results calculated for estimates using 10 and 50 samples, respectively. In general, we observe that the amortized estimator achieves the best Pearson correlation with the ground truth valuation scores when using just 10 samples; this reflects its improved sample complexity from sharing information across data points. In contrast, the parametric estimator cannot fit reliably with so few samples.
The three estimators tend to perform similarly with larger numbers of samples, with the exception of CIFAR-10, where the Monte Carlo estimator works better. Overall, this validates that our scaling law accurately captures the decay in marginal contributions with $k$, and offers an alternative approach to calculate data valuation scores.

\subsection{Application to point addition}
\label{sec: application subset}

We now apply our scaling law to select high-value examples to add to an existing dataset.
Based on our observation in \Cref{fig: lines}, the lines
for different points can cross one another, indicating that the relative rankings of data points may vary depending on the size of the
dataset. This suggests that
the size of the preceding dataset should be taken into account when evaluating which point is most helpful to add.
We therefore design a set of experiments inspired by \citet{kwon2021beta} where we compare the addition of high-value data points selected by different methods.

Here, we consider selecting 20 points from a population of 1000 to be added to preceding datasets with two different sizes: 100 and 1000. Each preceding dataset is randomly sampled from the whole training dataset, excluding the pool from which we select the new points, which is also randomly sampled. We use two baseline approaches to select points, random selections from the pool and Distributional Shapley values \citep{ghorbani2020distributional} fit with a large number of marginal contribution samples. For our approach, we fit the scaling parameters using the likelihood estimator from \Cref{sec: likelihood estimation} with 500 samples, and we use the parameters to estimate the expected contribution $\psi_k(z) \approx c(z) / k^{\alpha(z)}$. Because we consider two initial dataset sizes, we test our scaling law with two different $k$ values, $k = 100$ and $k = 1000$, which we refer to as \textit{Scaling 100} and \textit{Scaling 1000}.

\begin{table}[t]
    \centering
    \caption{\textbf{Accuracy improvement (\%) with 20 points added to preceding datasets of size 100.} Each element represents the improvement in test set accuracy with and without the selected points. The last row reports the model's average accuracy before adding new points.
    }
    \vspace{0.2cm}
    \begin{small}
    \begin{tabular}{cccc}
         \toprule
         Method& MiniBooNE & CIFAR-10 & IMDB \\
         \midrule
        Scaling 1000 & 0.21 $\pm$ 1.03      & 1.99 $\pm$ 0.97     & 0.88 $\pm$ 0.96     \\
        Scaling 100  & \textbf{0.47 $\pm$ 1.02}     & \textbf{3.28 $\pm$ 1.09 }    & \textbf{2.22 $\pm$ 0.86}    \\
        Shapley       & 0.28 $\pm$ 1.02      & 2.06 $\pm$ 0.92     & 0.77 $\pm$ 1.20     \\
        Random    & 0.44 $\pm$ 0.91      & 2.01 $\pm$ 1.09     & 1.00 $\pm$ 0.85     \\
        \midrule
        Preceding  & 80.76 $\pm$ 1.58    & 75.50 $\pm$ 1.97    & 78.77 $\pm$ 1.64    \\
         \bottomrule
    \end{tabular}
    \end{small}
    \vspace{-0.3cm}
    \label{tab: subset small}
\end{table}

\begin{table}[t]
    \centering
    \caption{\textbf{Accuracy improvement (\%) with 20 points added to preceding datasets of size 1000.} The settings are the same as \Cref{tab: subset small}.
    }
    \vspace{0.2cm}
    \begin{small}
    \begin{tabular}{cccc}
         \toprule
         Method& MiniBooNE & CIFAR-10 & IMDB \\
         \midrule
        Scaling 1000 & 0.20 $\pm$ 0.39      & 0.17 $\pm$ 0.14     & \textbf{0.13 $\pm$ 0.10}     \\
        Scaling 100  & 0.21 $\pm$ 0.34      & 0.05 $\pm$ 0.06     & 0.06 $\pm$ 0.05     \\
        Shapley       & \textbf{0.22 $\pm$ 0.36}      & \textbf{0.18 $\pm$ 0.14}     & 0.11 $\pm$ 0.09     \\
        Random    & 0.03 $\pm$ 0.20      & 0.04 $\pm$ 0.13     & 0.03 $\pm$ 0.10     \\
        \midrule
        Preceding  & 83.92 $\pm$ 0.46    & 85.38 $\pm$ 0.40    & 85.45 $\pm$ 0.35    \\
         \bottomrule
    \end{tabular}
    \end{small}
    \label{tab: subset large}
\end{table}

The results are shown in \Cref{tab: subset small,tab: subset large}, where we repeat the experiment 1000 times to report the mean and standard deviation in accuracy improvement. According to the results, Scaling 100 achieves the best performance when $k=100$, as it selects based on the predicted contribution for this dataset size.
However, the same set of points performs badly for $k=1000$, indicating that points with high contribution in the small-sample regime and large-sample regime are different. Similarly, we observe that Scaling 1000 performs well when $k=1000$, but badly when $k=100$. Interestingly, we observe that the performance of Shapley and Scaling 1000 are very similar, implying that the Shapley values are closely correlated with
contributions in the large sample regime, and may
be unreliable when evaluating performance in a small-sample regime.
In \Cref{app:point-addition}, we conduct a similar experiment with larger dataset sizes, which demonstrates the same trend.

In conclusion, selecting new training examples using the scaling law predictions at different dataset sizes can lead to quite different results, and selecting points based on the current dataset size
can help more reliably
achieve better performance.
This provides further support for our analysis of scaling behavior for individual data points, but we caution that a more powerful data selection method should take into account not just the size but the contents of the current dataset; our approach only quantifies the expected improvement when a data point is added to a \textit{random} dataset, and this is a limitation to its current usage for dataset selection.

\section{Discussion}
\label{sec:discussion}
This work studies scaling laws for the value of individual data points, and we found evidence for a simple parametric form that holds across datasets and model classes. Our experiments validated the scaling law by visualizing the log-linear trend and testing the scaling law's ability to predict
marginal contributions at different dataset sizes, and we found that the scaling law can extrapolate to larger dataset sizes than the range where it is fit. The behavior we study is costly to measure for an entire training dataset, so we also developed procedures to estimate the scaling parameters from a small number of noisy observations per data point.

Our study is a first step towards understanding this phenomenon, and it has certain limitations and suggests several future directions.
Fitting individual scaling laws remains somewhat expensive, as we require separate marginal contribution samples for each point; it may be possible to borrow insights from recent data valuation estimators that avoid calculating marginal contributions for each point and rely on a single pool of shared models \citep{feldman2020neural, wang2022data, li2023robust}. Another limitation is that our method is not specifically designed for dataset selection, and selecting many highly ranked points may lead to negative interaction effects that our scaling law does not take into account; we speculate that our study of scaling behavior could also be useful for predicting contributions to a specific dataset. Our analysis is restricted to the effect of individual points on a model's error, whereas other recent works have considered how training examples impact specific predictions \citep{ilyas2022datamodels, park2023trak, grosse2023studying};  a follow-up direction
is to analyze whether similar scaling behavior holds for these datapoint-to-datapoint relationships.
Our experiments additionally focused on relatively simple models and small-scale datasets, so an important direction for future work is studying the same phenomenon for large-scale deep learning models.
Finally, we believe that similar techniques that we use here can also be used to study how the value of a fixed \textit{subset} of data scales as the rest of the dataset increases; this would be a natural interpolation of individualized and aggregate scaling.

\section*{Impact Statement}
This paper presents work whose goal is to advance the field of machine learning. There are many potential societal consequences of our work, none which we feel must be specifically highlighted here.

\section*{Acknowledgements}
We thank members of the Zou and Hashimoto labs for helpful discussions, and the anonymous reviewers for their feedback. JZ is supported by funding from the CZ Biohub and NSF CAREER 1942926. 

\bibliography{main}
\bibliographystyle{apalike}

\clearpage
\appendix
\onecolumn
\section{Theoretical support for the individualized scaling law} \label{app:theory}

We now consider theoretical perspectives on our scaling law for some simple models and data distributions. We aim to provide preliminary mathematical support for the existence of the scaling law and give insight into factors that can affect the scaling parameters.
A precise characterization
requires reasoning about the expectation over $\Delta(z, \mcD)$ for datasets of a fixed size, which we find is a significant technical obstacle.
We instead characterize the limiting behavior of $\Delta(z, \mcD)$, which provides some general confirmation of the scaling behavior we observe in practice,
and we leave a more rigorous analysis of $\psi_k(z)$ as a future direction.


We begin by considering the case of linear regression, where datasets $\mcD = \{z_i\}_{i=1}^k$ are generated from the following distribution,
\begin{equation*}
\label{eq:linear}
    (\mcP): y_i = x_i^\top \beta^* + \epsilon_i,  x_i \iid \mcN(0, \Sigma), \epsilon_i\iid \mcN(0,\sigma^2),
\end{equation*} 
with $\Sigma\in\R^{d\times d}$ assumed to be positive definite. Denote the data point to be added as $z = (x, y)$, the preceding data matrix as $\bX_\mcD = (x_1, \ldots, x_k)^\top \in \R^{n\times d}$, and the ordinary least squares estimator corresponding to $\mcD$ and $\mcD\cup\{z\}$ as
$\hat\beta_{\mcD}$ and $\hat\beta_{\mcD\cup\{z\}}$. The loss function is defined as the mean squared error on the population,
\begin{equation*}
    \mcL(\beta) = \E_{(x',y')\sim\mcP}[(y' - \beta^\top x')^2],
\end{equation*}
and the marginal contribution can be written as $\Delta(z, \mcD) = \mcL(\hat\beta_{\mcD}) - \mcL(\hat\beta_{\mcD\cup\{z\}}).$
With this setup, we can characterize the marginal contribution of $z$ as follows. 
\begin{theorem}
\label{thm:linear}
If we denote the noise in $z$ as $\epsilon = y - x^\top\beta^*$, we have the following expectation with respect to the labels conditioned on the preceding dataset $\bX_\mcD$:
    $$\E[\Delta(z,\mcD) \mid z,\bX_{\mcD}] = \frac{2\sigma^2 - \epsilon^2}{k^2}x^\top\Sigma^{-1}x+o_P\left(\frac{1}{k^2}\right).$$
\end{theorem}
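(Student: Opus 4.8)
The plan is to reduce the marginal contribution to a quadratic form in the estimation errors, relate $\hat\beta_{\mcD\cup\{z\}}$ to $\hat\beta_\mcD$ by a rank-one update of the normal equations, compute the conditional expectation over the preceding labels \emph{exactly}, and then send $k\to\infty$ using a law of large numbers for the sample Gram matrix. First I would observe that under $(\mcP)$ the population loss has the closed form $\mcL(\beta) = \sigma^2 + (\beta-\beta^*)^\top\Sigma(\beta-\beta^*)$, so that with $\delta \equiv \hat\beta_\mcD - \beta^*$ and $u \equiv \hat\beta_{\mcD\cup\{z\}} - \hat\beta_\mcD$,
\begin{equation*}
\Delta(z,\mcD) = \delta^\top\Sigma\delta - (\delta+u)^\top\Sigma(\delta+u) = -2\,\delta^\top\Sigma u - u^\top\Sigma u.
\end{equation*}

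Next I would apply the Sherman--Morrison identity to the normal equations. Writing $A \equiv \bX_\mcD^\top\bX_\mcD$ and $h \equiv x^\top A^{-1}x$ (a leverage-type quantity), the standard leave-one-out update gives $u = \tfrac{r}{1+h}A^{-1}x$, where $r \equiv y - x^\top\hat\beta_\mcD = \epsilon - x^\top\delta$ is the residual of the new point against the old fit. Substituting this into the display above makes $\Delta(z,\mcD)$ an explicit function of $\delta$ and $r$, and once we condition on $z$ and $\bX_\mcD$ the only remaining randomness is the preceding noise vector $E = (\epsilon_1,\dots,\epsilon_k)^\top$, entering through $\delta = A^{-1}\bX_\mcD^\top E$.

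The core step is then the conditional second moments: $\E[\delta\mid \bX_\mcD]=0$, $\mathrm{Cov}(\delta\mid \bX_\mcD)=\sigma^2 A^{-1}$, hence $\E[r\mid z,\bX_\mcD]=\epsilon$, $\E[r^2\mid z,\bX_\mcD]=\epsilon^2+\sigma^2 h$, and the cross term $\E[r\,\delta\mid z,\bX_\mcD] = -\sigma^2 A^{-1}x$. Plugging these into the expansion of $\E[\Delta(z,\mcD)\mid z,\bX_\mcD]$ and simplifying the scalar prefactor yields the exact identity
\begin{equation*}
\E[\Delta(z,\mcD)\mid z,\bX_\mcD] = \frac{(2\sigma^2-\epsilon^2)+\sigma^2 h}{(1+h)^2}\;x^\top A^{-1}\Sigma A^{-1}x.
\end{equation*}
Finally I would pass to the limit: by the strong law of large numbers $\tfrac1k A \to \Sigma$ a.s., so by continuous mapping $kA^{-1}\to\Sigma^{-1}$, giving $h = O_P(1/k)\to 0$, $(1+h)^{-2}\to 1$, and $k^2\,x^\top A^{-1}\Sigma A^{-1}x = x^\top(kA^{-1})\Sigma(kA^{-1})x \to x^\top\Sigma^{-1}x$. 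Since the $\sigma^2 h$ numerator term then contributes only at order $1/k^3$, the claimed expansion follows.

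\textbf{Main obstacle.} The routine-but-delicate part is the exact conditional-moment bookkeeping --- in particular the sign and form of $\E[r\,\delta\mid z,\bX_\mcD]$, which is precisely what makes the combination $2\sigma^2-\epsilon^2$ appear --- together with keeping the $o_P(1/k^2)$ remainder honest when $\tfrac1k A$ replaces $\Sigma$. The genuinely hard step, which the theorem deliberately avoids by conditioning on $\bX_\mcD$, would be taking a further expectation over the random design: $x^\top A^{-1}\Sigma A^{-1}x$ then involves inverse-Wishart moments and the $o_P(1/k^2)$ error term need not have negligible expectation, which is exactly the obstruction to a clean statement about $\psi_k(z)$ itself.
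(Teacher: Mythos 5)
Your proposal is correct, and your exact conditional-expectation identity agrees with the paper's intermediate expression: the paper arrives (before taking limits) at $\frac{\sigma^2\,x^\top A^{-1}\Sigma A^{-1}x}{1+h}-(\epsilon^2-\sigma^2)\frac{x^\top A^{-1}\Sigma A^{-1}x}{(1+h)^2}$ with $A=\bX_\mcD^\top\bX_\mcD$ and $h=x^\top A^{-1}x$, which simplifies to your $\frac{(2\sigma^2-\epsilon^2)+\sigma^2 h}{(1+h)^2}\,x^\top A^{-1}\Sigma A^{-1}x$. The difference is organizational rather than substantive: the paper expands both $\hat\beta_\mcD-\beta^*$ and $\hat\beta_{\mcD\cup\{z\}}-\beta^*$ directly in the noise vector, evaluates the conditional expectation as a difference of expected quadratic forms (a trace difference plus the $\epsilon^2$ term), and only then invokes Sherman--Morrison to collapse the trace difference; you invoke Sherman--Morrison first to obtain the rank-one leave-one-out update $u=\frac{r}{1+h}A^{-1}x$, write $\Delta(z,\mcD)=-2\delta^\top\Sigma u-u^\top\Sigma u$, and extract the $2\sigma^2$ piece from the cross-moment $\E[r\,\delta\mid z,\bX_\mcD]=-\sigma^2A^{-1}x$ rather than from a trace identity. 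Your route makes the origin of the combination $2\sigma^2-\epsilon^2$ (a cross term plus a second-moment term) slightly more transparent and yields a clean closed-form conditional expectation; the paper's route keeps the two trained models symmetric and makes the leverage/trace structure explicit. Both proofs finish with the same step --- $\tfrac1k A\to\Sigma$, continuous mapping giving $kA^{-1}\to\Sigma^{-1}$, $h=O_P(1/k)$ --- so the asymptotic bookkeeping and the resulting $o_P(k^{-2})$ remainder are identical, and your closing caveat about the unconditional expectation over $\bX_\mcD$ matches the limitation the paper itself acknowledges.
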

The proof
is deferred to \Cref{appendix: linear model}.
    According to \Cref{thm:linear}, marginal contributions are expected to follow the scaling law with a scaling exponent $\alpha(z) = 2$ in the large sample regime.
    Meanwhile, the coefficient $c(z)$ depends on two factors: the first term $2\sigma^2 - \epsilon^2$ represents the relative noise level of $z$, where less noise
    results in a higher contribution, and the threshold for positive/negative contribution is twice the population noise level $\sigma^2$. The second term $x^\top\Sigma^{-1}x$ represents the asymptotic leverage score of the point, where a point with a higher score is more influential, and hence amplifies the magnitude of the contribution. Here we only analyze the expectation conditioned on the preceding dataset to give initial intuition about why the scaling law may exist, as the expectation over the preceding dataset is technically hard to analyze. Even if the leading term on the right-hand side does not depend on $\bX_\mcD$, the $o_P(k^{-2})$ term may have a non-negligible expectation. We leave a more complete characterization as a future direction.






Next, we consider the case of M-estimators, a more general family of
models that minimize an empirical loss function, 
including robust least squares and GLMs like logistic regression.
Our model in this case has parameters $\theta \in \R^d$, and we consider a per-example loss $\ell(\theta; x, y)$ with a metric $\mcL(\theta)$ on the model performance (typically an expectation $\mcL(\theta)$ over a population distribution $\mcQ$). We assume that the preceding dataset $\mcD = \{(x_i, y_i)\}_{i=1}^k$ is sampled i.i.d. from
$\mcQ$, and we
consider
the following M-estimation problem:
\begin{equation}
    \label{eqn:m-estimator-loo}
    \hat\theta_{\mcD} = \argmin_{\theta\in\R^d} \; \sum_{i=1}^{k}\ell(\theta; x_i,y_i).
\end{equation}
Given a new training example $z = (x, y)$, we also define the corresponding problem with one additional loss term:
\begin{equation}
    \label{eq:m-estimator}
    \hat\theta_{\mcD\cup \{z\}} = \argmin_{\theta\in\R^d} \; \left(\ell(\theta; x,y)+\sum_{i=1}^{k}\ell(\theta; x_i,y_i)\right).
\end{equation}
Our goal is to evaluate the difference in model performance from including the single data point $z$, and the marginal contribution is $\Delta(z,\mcD) = \mcL(\hat\theta_{\mcD})-\mcL(\hat\theta_{\mcD\cup \{z\}})$. To characterize this term we define the population minimizer as
$$
    \theta^* = \argmin_{\theta \in \R^d} \; \E_{(x',y')\sim \mcQ}[\ell(\theta; x',y')],
$$
and we denote the population Hessian matrix at $\theta^*$ as 
$$
V_{\theta^*} = \nabla^2\E_{(x',y')\sim \mcQ}[\ell(\theta^*;x',y')].
$$
Based on these terms, we can characterize the marginal contribution as follows. 




\begin{theorem} (Informal)
    \label{thm:m-estimator}
    Under regularity conditions on $\ell, \mcL$ and $\mcQ$,
    \begin{enumerate}[(1)]
        \item if $\nabla\mcL(\theta^*) \neq 0$, the marginal contribution satisfies
        $$
            \Delta(z,\mcD) = \frac{1}{k}\nabla\mcL(\theta^*)^\top V_{\theta^*}^{-1}\nabla\ell(\theta^*;x,y)+O_P\left(\frac{1}{k^{3/2}}\right),
        $$
        \item if $\nabla\mcL(\theta^*) = 0$, then $\sqrt{k}\nabla\mcL(\hat\theta_\mcD) = O_P(1)$ and
        \begin{align*}
            \Delta(z,\mcD) =& \frac{1}{k^{3/2}}(\sqrt{k}\nabla\mcL(\hat\theta_\mcD))^\top V_{\theta^*}^{-1}\nabla\ell(\theta^*;x,y)+O_P\left(\frac{1}{k^{2}}\right).
        \end{align*}
    \end{enumerate}
\end{theorem}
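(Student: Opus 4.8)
The plan is to run the classical ``add-one-observation'' influence-function analysis: first control how the M-estimator moves when $z$ is appended to $\mcD$ (a one-step Newton / infinitesimal-jackknife expansion), then push this perturbation through a second-order Taylor expansion of the population metric $\mcL$. Throughout I treat $z=(x,y)$ as fixed, so all $O_P$ and $o_P$ statements refer to the randomness of the i.i.d.\ sample $\mcD$; in particular $\nabla\ell(\theta^*;x,y)$ and $\nabla^2\ell(\theta^*;x,y)$ are finite constants.

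First I would expand the estimator difference. From the first-order conditions $\sum_{i=1}^k\nabla\ell(\hat\theta_\mcD;x_i,y_i)=0$ and $\nabla\ell(\hat\theta_{\mcD\cup\{z\}};x,y)+\sum_{i=1}^k\nabla\ell(\hat\theta_{\mcD\cup\{z\}};x_i,y_i)=0$, subtracting and Taylor-expanding the sum around $\hat\theta_\mcD$ (the zeroth-order term cancels by the first condition) gives
\begin{equation*}
    \Big(\nabla^2\ell(\hat\theta_\mcD;x,y)+\textstyle\sum_{i=1}^k\nabla^2\ell(\hat\theta_\mcD;x_i,y_i)\Big)\big(\hat\theta_{\mcD\cup\{z\}}-\hat\theta_\mcD\big)=-\nabla\ell(\hat\theta_\mcD;x,y)+R,
\end{equation*}
where $R$ collects the second-order remainder. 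Classical M-estimator theory supplies $\hat\theta_\mcD=\theta^*+O_P(k^{-1/2})$, and a uniform law of large numbers for $\nabla^2\ell$ near $\theta^*$ gives $\frac1k\sum_{i=1}^k\nabla^2\ell(\hat\theta_\mcD;x_i,y_i)=V_{\theta^*}+O_P(k^{-1/2})$; combining these with $\nabla\ell(\hat\theta_\mcD;x,y)=\nabla\ell(\theta^*;x,y)+O_P(k^{-1/2})$ yields
\begin{equation*}
    \hat\theta_{\mcD\cup\{z\}}-\hat\theta_\mcD=-\frac1k V_{\theta^*}^{-1}\nabla\ell(\theta^*;x,y)+O_P(k^{-3/2}).
\end{equation*}

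Next I would Taylor-expand $\mcL$ at $\hat\theta_\mcD$; since the step is $O_P(k^{-1})$ the quadratic term is $O_P(k^{-2})$, so
\begin{equation*}
    \Delta(z,\mcD)=\mcL(\hat\theta_\mcD)-\mcL(\hat\theta_{\mcD\cup\{z\}})=\frac1k\nabla\mcL(\hat\theta_\mcD)^\top V_{\theta^*}^{-1}\nabla\ell(\theta^*;x,y)+\nabla\mcL(\hat\theta_\mcD)^\top O_P(k^{-3/2})+O_P(k^{-2}).
\end{equation*}
If $\nabla\mcL(\theta^*)\neq0$, then $\nabla\mcL(\hat\theta_\mcD)=\nabla\mcL(\theta^*)+O_P(k^{-1/2})$, the trailing terms collapse to $O_P(k^{-3/2})$, and part~(1) follows. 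If $\nabla\mcL(\theta^*)=0$, expanding $\nabla\mcL$ around $\theta^*$ gives $\nabla\mcL(\hat\theta_\mcD)=\nabla^2\mcL(\theta^*)(\hat\theta_\mcD-\theta^*)+O_P(k^{-1})=O_P(k^{-1/2})$, so $\sqrt{k}\,\nabla\mcL(\hat\theta_\mcD)=O_P(1)$; rewriting the leading term as $k^{-3/2}(\sqrt{k}\,\nabla\mcL(\hat\theta_\mcD))^\top V_{\theta^*}^{-1}\nabla\ell(\theta^*;x,y)$ and observing that the extra $O_P(k^{-1/2})$ factor now sharpens the residuals to $O_P(k^{-2})$ gives part~(2).

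The main obstacle is not the algebra but certifying that every remainder genuinely has the stated $O_P$ rate under the unstated ``regularity conditions''. One needs consistency and the $\sqrt{k}$-rate of $\hat\theta_\mcD$; local Lipschitz control (or a uniform LLN) on $\nabla^2\ell$ in a neighborhood of $\theta^*$ so that both the Hessian substitution and the remainder $R$ are negligible relative to the leading $k^{-1}$ term; invertibility of $V_{\theta^*}$; and enough smoothness and integrability of the population functional $\mcL$ to justify its second-order Taylor expansion with an $O_P$-controlled remainder. Pinning down a clean minimal set of such conditions, and checking that the entire expansion remains valid conditionally on the fixed point $z$, is where the real work lies; this is also why the theorem is stated only informally here, with the precise conditions relegated to the appendix.
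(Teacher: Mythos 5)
Your proposal is correct and follows the same overall skeleton as the paper's proof: establish the one-step expansion $\hat\theta_{\mcD}-\hat\theta_{\mcD\cup\{z\}}=\frac{1}{k+1}V_{\theta^*}^{-1}\nabla\ell(\theta^*;x,y)+O_P(k^{-3/2})$, then push it through a Taylor expansion of $\mcL$ around $\hat\theta_{\mcD}$ and split into the two cases via the behavior of $\nabla\mcL(\hat\theta_{\mcD})$ (which the paper handles with the delta method, exactly as you do). Where you diverge is the key technical ingredient for the first step: you derive it from the estimating equations by a Newton/infinitesimal-jackknife expansion with a remainder $R$, whereas the paper invokes a ready-made deterministic leave-one-out inequality (Corollary 7.1 of Kuchibhotla and Patra) together with van der Vaart's asymptotic normality of M-estimators, and verifies that the quantities $\delta_{k+1,k+1}$ and $C(\cdot;x_i,y_i)$ appearing in that inequality are $O_P(k^{-1})$-small. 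The practical difference is in controlling $R$: a priori, consistency of both estimators only gives $\|\hat\theta_{\mcD\cup\{z\}}-\hat\theta_{\mcD}\|=O_P(k^{-1/2})$, so $R$ (of size $O_P(k)\cdot\|\hat\theta_{\mcD\cup\{z\}}-\hat\theta_{\mcD}\|^2$) is not yet negligible relative to the leading $k^{-1}$ term; your route therefore needs an extra bootstrapping or self-bounding step (e.g., local strong convexity of the empirical objective, or iterating the fixed-point inequality $d\lesssim k^{-1}+d^2$) to first upgrade the difference to $O_P(k^{-1})$. This is precisely the work that the paper outsources to the deterministic inequality, at the cost of its smoothness hypotheses on the ratio function $C(u;x,y)$ (the paper's Assumption on $C$); your classical expansion is more self-contained and transparent, but the missing a priori rate is the one step you should make explicit rather than fold into the generic remark about certifying remainders.
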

A formal statement of the regularity conditions and the proof are deferred to \Cref{appendix: m estimator}.
    In principle, the model performance should be the loss function evaluated on the population level, i.e., $\mcL(\theta) = \E_{(x', y')\sim\mcQ}[\ell(\theta; x', y')]$. In this case, the optimality of $\theta^*$ implies $\nabla \mcL(\theta^*)=0$. However, the underlying distribution $\mcQ$ is often intractable, so a common surrogate is the average loss on a held-out validation dataset $\{(x'_j, y'_j)\}_{j=1}^m$, i.e., $\hat{\mcL}(\theta) = \frac{1}{m}\sum_{j=1}^{m}\ell(\theta; x'_j, y'_j)$. The minimizer of this finite-sample metric is generally different from the population minimizer $\theta^*$, and we therefore have $\nabla \hat{\mcL}(\theta^*) \neq 0$. According to \Cref{thm:m-estimator}, the asymptotic rate can differ under these two settings. In practice, the test set typically has a comparable size to the training set, hence we should expect the empirical behavior to be mixed. We also remark that
    even if $\sqrt{k}\nabla\mcL(\hat\theta_\mcD) = O_P(1)$, its expectation could be non-negligible and the leading term could be $O_P(k^{-2})$ in the second case. This occurs in the linear regression model according to \Cref{thm:linear}, but for general M-estimators, it is not clear what the order of this term is after taking the expectation across $\mcD$.

Overall, our theoretical analysis provides support for a scaling exponent $\alpha(z)$ in the marginal contribution with respect to the dataset size, and some examples of factors that affect the coefficient $c(z)$. However, we
emphasize that our theoretical analysis is limited, as it requires strong regularity assumptions on both the models and data distributions. Moreover, our results are limited to single marginal contributions $\Delta(z,\mcD)$, and the expectation of the $O_P$ terms may not be negligible even in the limit. Therefore, these results do not make precise predictions about the exact
parameters for the scaling law in practice, which also involves finite sample bias from both training and evaluation. These factors explain why our empirical results show heterogeneity in the scaling parameter $\alpha(z)$, although it is often concentrated in a range from $[1, 1.5]$ that is consistent with the loose predictions from our theoretical results.
We remark that such disagreements between theory and practice are also widely observed with aggregate scaling laws \citep{hutter2021learning}.

\subsection{Proof of \Cref{thm:linear}}
\label{appendix: linear model}

\begin{proof}
In the following analysis we denote $\bX_{\mcD} = (X_1,\cdots, X_n)^T\in \R^{n\times p}, \by_{\mcD} = (y_1, \cdots, y_n)^T \in \R^{n}, \beps_{\mcD} = (\epsilon_1, \cdots, \epsilon_n)^T \in \R^{n} $ for convenience. 
    Notice that:
\begin{align*}
    \mcL(\beta) = \E_{(x',y')\sim \mcP}[(x'^T\beta^{\star} + \epsilon -x'^T\beta)^2] = \sigma^2 + (\beta-\beta^\star)^T\Sigma(\beta-\beta^\star),
\end{align*}
\begin{align*}
    \hat\beta_{\mcD} - \beta^\star = (\bX^T_{\mcD}\bX_{\mcD})^{-1}\bX_{\mcD}^T\beps_{\mcD}, \quad     \hat\beta_{\mcD\cup\{n+1\}} - \beta^\star = (\bX^T_{\mcD}\bX_{\mcD}+xx^T)^{-1}(\bX_{\mcD}^T\beps_{\mcD}+x\epsilon)
\end{align*}
Then, in expectation over $\beps_{\mcD}$ (i.e., expectation conditional on $\bX_\mcD$ and $z$), we have 
\begin{equation*}
\begin{aligned}
    \E[\Delta(z,\mcD)|z,\bX_\mcD] = &\E_{\beps_\mcD}[\beps_{\mcD}^T\bX_{\mcD}(\bX^T_{\mcD}\bX_{\mcD})^{-1}\Sigma(\bX^T_{\mcD}\bX_{\mcD})^{-1}\bX_{\mcD}^T\beps_{\mcD}]\\
    &-\E_{\beps_\mcD}[(\bX_{\mcD}^T\beps_{\mcD}+x\epsilon)^T(\bX^T_{\mcD}\bX_{\mcD}+xx^T)^{-1}\Sigma(\bX^T_{\mcD}\bX_{\mcD}+xx^T)^{-1}(\bX_{\mcD}^T\beps_{\mcD}+x\epsilon)]\\
    =&\sigma^2\tr(\bX_{\mcD}(\bX^T_{\mcD}\bX_{\mcD})^{-1}\Sigma(\bX^T_{\mcD}\bX_{\mcD})^{-1}\bX_{\mcD}^T)\\
    &- \epsilon^2x^T(\bX^T_{\mcD}\bX_{\mcD}+xx^T)^{-1}\Sigma(\bX^T_{\mcD}\bX_{\mcD}+xx^T)^{-1}x\\
    &-\sigma^2\tr(\bX_{\mcD}(\bX^T_{\mcD}\bX_{\mcD}+xx^T)^{-1}\Sigma(\bX^T_{\mcD}\bX_{\mcD}+xx^T)^{-1}\bX_{\mcD}^T)\\
    =&\sigma^2\tr((\bX^T_{\mcD}\bX_{\mcD})^{-1}\Sigma - (\bX^T_{\mcD}\bX_{\mcD}+xx^T)^{-1}\Sigma)\\
    &- (\epsilon^2-\sigma^2)x^T(\bX^T_{\mcD}\bX_{\mcD}+xx^T)^{-1}\Sigma(\bX^T_{\mcD}\bX_{\mcD}+xx^T)^{-1}x
\end{aligned}
\end{equation*}
Applying the Sherman–Morrison formula, we know that
\begin{equation*}
    (\bX^T_{\mcD}\bX_{\mcD}+xx^T)^{-1} = (\bX^T_{\mcD}\bX_{\mcD})^{-1} - \frac{(\bX^T_{\mcD}\bX_{\mcD})^{-1}xx^T (\bX^T_{\mcD}\bX_{\mcD})^{-1}}{1+x^T(\bX^T_{\mcD}\bX_{\mcD})^{-1}x}.
\end{equation*}
By the law of large numbers we have $\frac{1}{k}\bX^T_{\mcD}\bX_{\mcD}\xrightarrow{p}\Sigma$, and therefore by the continuous mapping theorem we have $k(\bX^T_{\mcD}\bX_{\mcD})^{-1}\xrightarrow{p}\Sigma^{-1}$ and $k(\bX^T_{\mcD}\bX_{\mcD}+xx^T)^{-1}\xrightarrow{p}\Sigma^{-1}$. 
Hence,
\begin{equation*}
    \begin{aligned}
         \E[\Delta(z,\mcD)|z,\bX_\mcD] =& \frac{\sigma^2x^T(\bX^T_{\mcD}\bX_{\mcD})^{-1}\Sigma(\bX^T_{\mcD}\bX_{\mcD})^{-1}x}{1+x^T(\bX^T_{\mcD}\bX_{\mcD})^{-1}x}-(\epsilon^2-\sigma^2)x^T(\bX^T_{\mcD}\bX_{\mcD}+xx^T)^{-1}\Sigma(\bX^T_{\mcD}\bX_{\mcD}+xx^T)^{-1}x\\
         =&\frac{1}{k^2}\frac{\sigma^2x^T\Sigma^{-1}\Sigma\Sigma^{-1}x+o_P(1)}{1+o_P(1)}-\frac{1}{k^2}\left((\epsilon^2-\sigma^2)x^T\Sigma^{-1}\Sigma\Sigma^{-1}x+o_P(1)\right)\\
         =&\frac{2\sigma^2-\epsilon^2}{k^2}x^T\Sigma^{-1} x+o_P\left(\frac{1}{k^2}\right),
    \end{aligned}
\end{equation*}
which finishes the proof.
\end{proof}

\subsection{Proof of \Cref{thm:m-estimator}}
\label{appendix: m estimator}
In this section, we prove the individualized scaling law for M-estimators. For convenience, we index the point to be evaluated as $(x_{k+1}, y_{k+1})$, and write $\hat\theta_{k+1} = \hat\theta_{\mcD\cup\{z\}}, \hat\theta_{k} = \hat\theta_{\mcD}$ in the following analysis. For this general model family, we cannot obtain a closed-form solution as before, and we will focus on the asymptotics instead. A standard result on the asymptotic distribution of M-estimators is the following. 

\begin{lemma}[Asymptotic normality of M-estimators] 
\label{lem: M estimator asymptotics}
    Assume that the loss function $\ell(\theta; x, y)$ is twice differentiable in $\theta$ such that, for every $\theta_1,\theta_2$ in a neighborhood of $\theta^*$ and a measurable function $M(x, y)$ with $\E_{\mcQ}[M(x, y)]\leq\infty$, 
    $$
    \|\ell(\theta_1; x, y)-\ell(\theta_2; x, y)\|\leq M(x, y)\|\theta_1-\theta_2\|.
    $$
    $$
    \|\nabla\ell(\theta_1; x, y)-\nabla\ell(\theta_2; x, y)\|\leq M(x, y)\|\theta_1-\theta_2\|.
    $$
    Also assume that $\E_\mcQ[\|\nabla\ell(\theta^*; x, y)\|_2^2]\leq\infty$, $\theta\rightarrow\E_\mcQ\nabla\ell(\theta; x, y)$ is differentiable at $\theta^*$ with nonsingular derivative matrix $V_{\theta^{*}}$. If $\hat\theta_k\xrightarrow[]{P}\theta^*$ and $k^{-1}\sum_{i=1}^k\nabla\ell(\hat\theta_k; x_i, y_i)=o_p(k^{-1/2})$, then $\sqrt{k}(\hat\theta_{\mcD}-\theta^*)$ is asymptotically normal with mean zero and covariance matrix $V_{\theta^{*}}^{-1}\E_{\mcQ}[\nabla\ell(\theta^*; x, y)\nabla\ell(\theta^*; x, y)^T]V_{\theta^{*}}^{-1}$.
\end{lemma}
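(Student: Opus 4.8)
This is the classical asymptotic-normality statement for M/Z-estimators, and the plan is to follow the standard empirical-process argument. I would write $z_i = (x_i, y_i)$, set $\Psi_k(\theta) = \frac1k\sum_{i=1}^k \nabla\ell(\theta; z_i)$ and $\Psi(\theta) = \E_\mcQ[\nabla\ell(\theta; z)]$, and observe that by hypothesis $\hat\theta_k = \hat\theta_\mcD$ is an approximate zero of $\Psi_k$, namely $\Psi_k(\hat\theta_k) = o_P(k^{-1/2})$. The first small point to settle is $\Psi(\theta^*) = 0$: since $\theta^*$ minimizes $\E_\mcQ[\ell(\cdot; z)]$ over $\R^d$, differentiation under the expectation — justified by the local Lipschitz bound on $\ell$ with integrable constant $M$ via dominated convergence — gives $\E_\mcQ[\nabla\ell(\theta^*; z)] = 0$.

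The core step is an asymptotic-equicontinuity linearization. I would introduce the empirical process $\mathbb{G}_k(\theta) = \sqrt{k}\,(\Psi_k(\theta) - \Psi(\theta))$ and argue that the function class $\{\nabla\ell(\theta; \cdot) : \|\theta - \theta^*\| \le \delta\}$ is indexed in a Lipschitz manner by $\theta$ with Lipschitz constant dominated by the integrable envelope $M$, so it is Donsker; hence $\mathbb{G}_k$ is asymptotically equicontinuous at $\theta^*$, and combined with the consistency $\hat\theta_k \xrightarrow{p} \theta^*$ this yields $\mathbb{G}_k(\hat\theta_k) - \mathbb{G}_k(\theta^*) = o_P(1)$. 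Using $\Psi(\theta^*) = 0$ this reads
$$\sqrt{k}\,\Psi_k(\hat\theta_k) - \sqrt{k}\,\Psi(\hat\theta_k) = \sqrt{k}\,\Psi_k(\theta^*) + o_P(1),$$
and since $\sqrt{k}\,\Psi_k(\hat\theta_k) = o_P(1)$ by assumption we get $\sqrt{k}\,\Psi_k(\theta^*) = -\sqrt{k}\,\Psi(\hat\theta_k) + o_P(1)$. Expanding the population map by differentiability at $\theta^*$ gives $\Psi(\hat\theta_k) = V_{\theta^*}(\hat\theta_k - \theta^*) + o_P(\|\hat\theta_k - \theta^*\|)$.

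It then remains to extract the $\sqrt{k}$-rate and identify the limit. By the multivariate CLT and $\E_\mcQ\|\nabla\ell(\theta^*; z)\|_2^2 < \infty$, $\sqrt{k}\,\Psi_k(\theta^*)$ is $O_P(1)$ and converges in distribution to $\mcN\!\big(0, \E_\mcQ[\nabla\ell(\theta^*;z)\nabla\ell(\theta^*;z)^\top]\big)$. Plugging the expansion of $\Psi(\hat\theta_k)$ into the previous display and using nonsingularity of $V_{\theta^*}$, a standard rate argument upgrades $\hat\theta_k \xrightarrow{p}\theta^*$ to $\sqrt{k}(\hat\theta_k - \theta^*) = O_P(1)$, so the remainder $o_P(\sqrt{k}\|\hat\theta_k - \theta^*\|) = o_P(1)$, whence
$$\sqrt{k}(\hat\theta_k - \theta^*) = -V_{\theta^*}^{-1}\,\sqrt{k}\,\Psi_k(\theta^*) + o_P(1).$$
Slutsky's theorem then delivers the stated asymptotically normal limit with sandwich covariance $V_{\theta^*}^{-1}\E_\mcQ[\nabla\ell(\theta^*;z)\nabla\ell(\theta^*;z)^\top]V_{\theta^*}^{-1}$.

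\textbf{Main obstacle.} The delicate step is the asymptotic equicontinuity of $\mathbb{G}_k$ at $\theta^*$ — verifying directly from the local Lipschitz hypothesis with integrable constant that the gradient class $\{\nabla\ell(\theta;\cdot)\}$ is Donsker, or at least stochastically equicontinuous over shrinking neighborhoods of $\theta^*$. One must also handle the bootstrap-type rate argument carefully, since the $o_P(\|\hat\theta_k-\theta^*\|)$ remainder can only be absorbed after the preliminary consistency has been strengthened to the $\sqrt{k}$-rate; the remaining ingredients (differentiation under the integral sign, the CLT, and Slutsky's theorem) are routine.
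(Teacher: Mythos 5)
The paper does not prove this lemma itself; it simply cites \citet{van2000asymptotic} (Theorem 5.21), and your sketch is precisely the standard empirical-process proof of that theorem (differentiation under the integral to get $\E_\mcQ[\nabla\ell(\theta^*;x,y)]=0$, Donsker/asymptotic-equicontinuity linearization of $\sqrt{k}\,\Psi_k$, upgrade of consistency to the $\sqrt{k}$-rate, then Slutsky), so your approach coincides with the cited source. One caveat worth fixing in your equicontinuity step: the Donsker property of the Lipschitz-indexed gradient class via bracketing requires the Lipschitz constant to be square-integrable, $\E_\mcQ[M^2(x,y)]<\infty$, exactly as in van der Vaart's statement of Theorem 5.21 — mere integrability of $M$ (as the lemma's condition, presumably a typo for ``$<\infty$'', reads) does not suffice for that argument.
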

The proof can be found in \citet{van2000asymptotic} (see Theorem 5.21 therein). Define the empirical Hessian matrix evaluated on the empirical minimizers as
$$
    \hat H_{k+1} := \frac{1}{k+1} \sum_{i=1}^{k+1} \nabla^2 \ell\left(\hat{\theta}_{k+1}; x_i, y_i\right).
$$
Also, define for $1 \leq i \leq k+1$,
$$
\delta_{i, k+1}:=\frac{(k+1)^{-1}\left\|{\hat H}_{k+1}^{-1} \nabla\ell\left(\hat{\theta}_{k+1}; x_i, y_i\right)\right\|_2}{1-(k+1)^{-1}\left\|{\hat H}_{k+1}^{-1} \nabla^2\ell\left(\hat{\theta}_{k+1}; x_i, y_i\right)\right\|_{op}}.
$$
The main technique we use to prove \Cref{thm:m-estimator} is the following inequality on leave-one-out M-estimators from \citet{kuchibhotla2018deterministic}.
\begin{lemma}[Corollary 7.1 from \citealt{kuchibhotla2018deterministic}]
\label{lem:smooth M estimator}
Assume that $\ell(\theta; x,y)$ is convex and twice differentiable in $\theta$ for every $(x, y)$. If $\delta_{i, k+1} \geq 0$ for all $1 \leq i \leq k+1$ and
$$
\max _{1 \leq i \neq j \leq n} C\left(\frac{3}{2} \delta_{i, k+1}; x_j, y_j\right) \leq \frac{4}{3}
$$
then 
$$
\begin{aligned}
&\| \hat{\theta}_{k}  -\hat{\theta}_{k+1}-(k+1)^{-1} \hat {H}_{k+1}^{-1} \nabla \ell(\hat{\theta}_{k+1}; x_{k+1}, y_{k+1}) \|_2 \\
\leq& \frac{3 \delta_{k+1, k+1}}{2}\left[\max _{1 \leq i \leq k} C\left(\frac{3}{2} \delta_{k+1, k+1}; x_i, y_i\right)-1+(k+1)^{-1}\left\|\hat {H}_{k+1}^{-1} \nabla^2 \ell(\hat{\theta}_{k+1}; x_{k+1}, y_{k+1})\right\|_{op}\right] .
\end{aligned}
$$    
\end{lemma}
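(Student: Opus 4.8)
Since the final statement is the deterministic leave-one-out stability bound quoted as Corollary 7.1 of \citet{kuchibhotla2018deterministic}, the plan is to reconstruct its proof as a controlled one-step Newton argument. Write $F_{k+1}(\theta) = \sum_{i=1}^{k+1}\ell(\theta; x_i, y_i)$ and $F_k(\theta) = F_{k+1}(\theta) - \ell(\theta; x_{k+1}, y_{k+1})$, so that first-order optimality (both objectives are convex) gives $\nabla F_{k+1}(\hat\theta_{k+1}) = 0$ and $\nabla F_k(\hat\theta_k) = 0$. The claimed expansion uses the \emph{full} scaled Hessian $\hat H_{k+1} = (k+1)^{-1}\nabla^2 F_{k+1}(\hat\theta_{k+1})$, and since $(k+1)^{-1}\hat H_{k+1}^{-1} = [\nabla^2 F_{k+1}(\hat\theta_{k+1})]^{-1}$, the target proxy is $\tilde\theta = \hat\theta_{k+1} + [\nabla^2 F_{k+1}(\hat\theta_{k+1})]^{-1}\nabla\ell(\hat\theta_{k+1}; x_{k+1}, y_{k+1})$; I would bound $\|\hat\theta_k - \tilde\theta\|_2$, which is exactly the left-hand side of the stated inequality.

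First I would evaluate $\nabla F_k$ at the anchor $\hat\theta_{k+1}$: since the full gradient vanishes there, $\nabla F_k(\hat\theta_{k+1}) = -\nabla\ell(\hat\theta_{k+1}; x_{k+1}, y_{k+1})$, i.e., dropping a point leaves a residual gradient equal to that point's own gradient. Next I would expand $\nabla F_k(\hat\theta_k) = 0$ about $\hat\theta_{k+1}$ using the exact integral form of the gradient, obtaining $0 = -\nabla\ell(\hat\theta_{k+1}; x_{k+1}, y_{k+1}) + \bar H\,(\hat\theta_k - \hat\theta_{k+1})$ with $\bar H = \int_0^1 \nabla^2 F_k\big(\hat\theta_{k+1} + t(\hat\theta_k - \hat\theta_{k+1})\big)\,dt$. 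Solving gives $\hat\theta_k - \hat\theta_{k+1} = \bar H^{-1}\nabla\ell(\hat\theta_{k+1}; x_{k+1}, y_{k+1})$, so the whole error reduces to replacing the path-averaged leave-one-out Hessian $\bar H$ by the anchored full Hessian $\nabla^2 F_{k+1}(\hat\theta_{k+1})$.

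Controlling that replacement is where the curvature modulus $C(\cdot; x, y)$ and the radii $\delta_{i,k+1}$ enter. Here $\delta_{k+1,k+1}$ is an a priori bound on the Newton step: its numerator $(k+1)^{-1}\|\hat H_{k+1}^{-1}\nabla\ell(\hat\theta_{k+1}; x_{k+1}, y_{k+1})\|_2$ is the step length, and its denominator $1 - (k+1)^{-1}\|\hat H_{k+1}^{-1}\nabla^2\ell(\hat\theta_{k+1}; x_{k+1}, y_{k+1})\|_{op}$ inflates it to account for the one missing Hessian term, while $C(r; x, y)$ encodes the multiplicative distortion $\nabla^2\ell(\theta'; x, y) \preceq C(r; x, y)\,\nabla^2\ell(\hat\theta_{k+1}; x, y)$ whenever $\|\theta' - \hat\theta_{k+1}\|_2 \le r$. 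I would first run a self-bounding fixed-point argument establishing $\|\hat\theta_k - \hat\theta_{k+1}\|_2 \le \tfrac32\delta_{k+1,k+1}$: convexity together with the hypothesis $\max_{i\neq j} C(\tfrac32\delta_{i,k+1}; x_j, y_j) \le \tfrac43$ keeps $\bar H$ invertible and within a constant factor of the anchor, so the displacement cannot escape the ball in which the curvature bound is assumed valid. Within that ball I would then split $\bar H^{-1} - [\nabla^2 F_{k+1}(\hat\theta_{k+1})]^{-1}$ into two relative errors: (a) the path distortion of $\nabla^2 F_k$ away from $\nabla^2 F_k(\hat\theta_{k+1})$, contributing the factor $\max_i C(\tfrac32\delta_{k+1,k+1}; x_i, y_i) - 1$; and (b) the single dropped second-order term $\nabla^2\ell(\hat\theta_{k+1}; x_{k+1}, y_{k+1})$ that separates $\nabla^2 F_k(\hat\theta_{k+1})$ from the full Hessian, contributing $(k+1)^{-1}\|\hat H_{k+1}^{-1}\nabla^2\ell(\hat\theta_{k+1}; x_{k+1}, y_{k+1})\|_{op}$. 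Multiplying the sum of these by the step magnitude $\tfrac32\delta_{k+1,k+1}$ recovers exactly the stated bracket.

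The main obstacle is the interplay between the self-bounding step and the matrix-inverse perturbation bookkeeping: I must certify that the Newton displacement provably remains inside the ball where the curvature modulus applies (otherwise the $C$-bound cannot be invoked at all), and I must convert the multiplicative Hessian comparisons $\preceq C(\cdot)$ into a clean additive bracket $[\,C - 1 + (k+1)^{-1}\|\cdots\|_{op}\,]$ on the inverse-Hessian difference without losing constants. Aligning both the fixed-point radius and the operator-norm perturbation constants precisely with the $\tfrac32$ and $\tfrac43$ factors — which is what makes this a fully deterministic rather than asymptotic statement — is the delicate part; the remainder is routine convex-analysis estimation.
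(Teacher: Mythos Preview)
The paper does not prove this lemma at all: it is stated verbatim as Corollary~7.1 of \citet{kuchibhotla2018deterministic} and invoked as a black box in the proof of Theorem~\ref{thm: m estimator appendix}. There is therefore no in-paper argument to compare your proposal against.

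That said, your reconstruction is a faithful outline of how the Kuchibhotla--type deterministic leave-one-out bound is actually established: the integral mean-value expansion of $\nabla F_k$ about $\hat\theta_{k+1}$, the identification of the residual gradient as $-\nabla\ell(\hat\theta_{k+1};x_{k+1},y_{k+1})$, the self-bounding step that traps $\|\hat\theta_k-\hat\theta_{k+1}\|_2$ inside $\tfrac32\delta_{k+1,k+1}$, and the two-part Hessian-replacement error (path distortion plus dropped term) are exactly the ingredients. The one place where your sketch is still genuinely incomplete is the fixed-point/self-bounding step: you assert that the hypothesis $\max_{i\neq j} C(\tfrac32\delta_{i,k+1};x_j,y_j)\le\tfrac43$ keeps the displacement inside the ball, but the actual argument requires a continuity/monotonicity step (typically a contraction or a ``first exit time'' argument along the segment) to rule out that $\hat\theta_k$ lies outside the ball where the curvature bound is valid, and getting the specific constants $\tfrac32$ and $\tfrac43$ to close the loop is where the work lies. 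If you intend to include a full proof rather than cite the source, that closure is the piece you would need to write out carefully.
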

A few more assumptions we need to analyze the marginal contribution are:
\begin{assumption}
    \label{asm:metric}
     The metric $\mcL(\theta)$ is twice differentiable, and $\|\nabla\mcL(\theta)\|_2, \|\nabla^2\mcL(\theta)\|_{op}$ are bounded by a universal constant $C_1>0$.
\end{assumption}
\begin{assumption}
    \label{asm:smooth}
    For $u\geq 0$, the function 
    \begin{equation}
        C(u; x, y) = \sup_{\|\theta_1-\theta_2\|_2\leq u}\sup_{\|e\|_2 = 1} \frac{e^T\nabla^2\ell(\theta_1; x, y)e}{e^T\nabla^2\ell(\theta_2; x, y)e}
    \end{equation}
    is differentiable with its derivative bounded by a universal constant $C_2$ at a neighborhood of $u=0$ for any $(x,y)$.
\end{assumption}
Now we are ready to state and prove the formal version of \Cref{thm:m-estimator}.
\begin{theorem}[Formal version of \Cref{thm:m-estimator}]
\label{thm: m estimator appendix}
    Under the conditions of \Cref{lem: M estimator asymptotics} and \Cref{lem:smooth M estimator}, assume that \Cref{asm:metric}, \Cref{asm:smooth} hold, and $\E_{\mcQ}[\|\nabla^2\ell(\theta;x,y)\|]\leq\infty$, then:
    \begin{enumerate}[(1)]
        \item if $\nabla\mcL(\theta^*) \neq 0$, the marginal contribution satisfies
$$
\Delta(z,\mcD) = \frac{1}{k}\nabla\mcL(\hat\theta^*)^TV_{\theta^*}^{-1}\nabla\ell(\theta^*;x,y)+O_P(\frac{1}{k^{3/2}}),
$$
        \item if $\nabla\mcL(\theta^*) = 0$, then $\sqrt{k}\nabla\mcL(\hat\theta_k) = O_P(1)$ and
$$
\Delta(z,\mcD) = \frac{1}{k^{3/2}}(\sqrt{k}\nabla\mcL(\hat\theta_k))^TV_{\theta^*}^{-1}\nabla\ell(\theta^*;x,y)+O_P(\frac{1}{k^{2}}),
$$
\end{enumerate}
\end{theorem}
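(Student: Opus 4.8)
The plan is to reduce the statement to a careful Taylor expansion of the metric $\mcL$ around the two estimators, after first establishing a sharp first-order expansion of the parameter gap $\hat\theta_k-\hat\theta_{k+1}$. Writing $\Delta(z,\mcD)=\mcL(\hat\theta_k)-\mcL(\hat\theta_{k+1})$ and expanding with Lagrange remainder gives
$$\Delta(z,\mcD)=\nabla\mcL(\hat\theta_{k+1})^\top(\hat\theta_k-\hat\theta_{k+1})+O_P\big(\|\hat\theta_k-\hat\theta_{k+1}\|^2\big),$$
where the quadratic remainder is controlled by the bound $\|\nabla^2\mcL\|_{op}\le C_1$ from \Cref{asm:metric}. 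Both $\hat\theta_k,\hat\theta_{k+1}$ are consistent and $\sqrt{k}$-asymptotically normal by \Cref{lem: M estimator asymptotics} (the required near-stationarity condition holds exactly, since the estimators are exact minimizers of convex differentiable objectives). So everything hinges on a precise handle on $\hat\theta_k-\hat\theta_{k+1}$ and on $\nabla\mcL$ evaluated near $\theta^*$.

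For the parameter gap I would invoke \Cref{lem:smooth M estimator}, which gives $\hat\theta_k-\hat\theta_{k+1}=(k+1)^{-1}\hat H_{k+1}^{-1}\nabla\ell(\hat\theta_{k+1};x,y)+R_k$ with $R_k$ bounded by the lemma's right-hand side. First one checks the hypotheses hold with probability tending to one: $\delta_{i,k+1}\ge 0$ is immediate, and $\max_{i\ne j}C(\tfrac32\delta_{i,k+1};x_j,y_j)\le\tfrac43$ follows because $\max_i\delta_{i,k+1}=o_P(1)$ — using $\hat\theta_{k+1}\xrightarrow{p}\theta^*$, $\hat H_{k+1}^{-1}=O_P(1)$, the Lipschitz bound on $\nabla\ell$ near $\theta^*$, and $\max_{1\le i\le k+1}\|\nabla\ell(\theta^*;x_i,y_i)\|=o_P(\sqrt{k})$ (a consequence of $\E_\mcQ\|\nabla\ell(\theta^*;x,y)\|_2^2<\infty$), together with $C(0;\cdot)=1$ and the continuity in \Cref{asm:smooth}. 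Since $z$ is fixed, $\|\nabla\ell(\hat\theta_{k+1};x,y)\|=O_P(1)$, so $\delta_{k+1,k+1}=O_P(1/k)$; combined with $C(u;\cdot)-1=O(u)$ near $0$ (bounded derivative $C_2$ in \Cref{asm:smooth}) and $(k+1)^{-1}\|\hat H_{k+1}^{-1}\nabla^2\ell(\hat\theta_{k+1};x,y)\|_{op}=O_P(1/k)$, the lemma's bound yields $R_k=O_P(1/k^2)$. Finally I replace $\hat H_{k+1}$ by $V_{\theta^*}$ and $\hat\theta_{k+1}$ by $\theta^*$ in the leading term: $\hat H_{k+1}\xrightarrow{p}V_{\theta^*}$ (LLN for $\nabla^2\ell$ at $\theta^*$ under $\E_\mcQ\|\nabla^2\ell\|<\infty$, plus the multiplicative control from $C(\cdot;\cdot)\to 1$), and $\nabla\ell(\hat\theta_{k+1};x,y)-\nabla\ell(\theta^*;x,y)=O_P(\|\hat\theta_{k+1}-\theta^*\|)=O_P(1/\sqrt k)$. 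Using $(k+1)^{-1}=k^{-1}+O(k^{-2})$ this gives
$$\hat\theta_k-\hat\theta_{k+1}=\frac1k V_{\theta^*}^{-1}\nabla\ell(\theta^*;x,y)+O_P\big(k^{-3/2}\big).$$

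It remains to substitute and split into cases. As $\|\hat\theta_k-\hat\theta_{k+1}\|=O_P(1/k)$, the quadratic remainder in $\mcL$ is $O_P(k^{-2})$, so $\Delta(z,\mcD)=\nabla\mcL(\hat\theta_{k+1})^\top\big[\tfrac1k V_{\theta^*}^{-1}\nabla\ell(\theta^*;x,y)+O_P(k^{-3/2})\big]+O_P(k^{-2})$. In case (1), $\nabla\mcL(\hat\theta_{k+1})=\nabla\mcL(\theta^*)+O_P(1/\sqrt k)$ by expanding $\nabla\mcL$ and using $\|\nabla^2\mcL\|_{op}\le C_1$; since $\nabla\mcL(\theta^*)\ne 0$ the leading term is of order $1/k$ and every error contribution collapses into $O_P(k^{-3/2})$, giving the first claim. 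In case (2), $\nabla\mcL(\theta^*)=0$, so expanding $\nabla\mcL$ about $\theta^*$ with $\hat\theta_k-\theta^*=O_P(1/\sqrt k)$ gives $\sqrt k\,\nabla\mcL(\hat\theta_k)=O_P(1)$, hence $\nabla\mcL(\hat\theta_{k+1})=O_P(1/\sqrt k)$; then $\nabla\mcL(\hat\theta_{k+1})^\top O_P(k^{-3/2})=O_P(k^{-2})$, so $\Delta(z,\mcD)=\tfrac1k\nabla\mcL(\hat\theta_{k+1})^\top V_{\theta^*}^{-1}\nabla\ell(\theta^*;x,y)+O_P(k^{-2})$, and replacing $\hat\theta_{k+1}$ by $\hat\theta_k$ (the difference costs $\tfrac1k\cdot O_P(1/k)=O_P(k^{-2})$ after one more expansion of $\nabla\mcL$) and writing $\tfrac1k=\tfrac1{k^{3/2}}\sqrt k$ gives the second claim. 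The main obstacle is the parameter-gap step: converting the deterministic inequality of \Cref{lem:smooth M estimator}, with its data-dependent hypotheses, into the clean stochastic expansion above — this demands uniform-in-$i$ control of the gradient and Hessian norms at all $k+1$ training points (where the second-moment condition on $\nabla\ell(\theta^*;x,y)$ and the differentiability of $C(\cdot;x,y)$ at $0$ are used) and, crucially, showing the lemma's error is genuinely $O_P(k^{-2})$ rather than merely $o_P(k^{-1})$, the order needed for case (2); the remaining Taylor bookkeeping for $\mcL$ is routine once the sharp expansion of the gap is available.
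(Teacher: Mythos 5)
Your proposal is correct and follows essentially the same route as the paper's proof: a Taylor expansion of $\mcL$ in the gap $\hat\theta_k-\hat\theta_{k+1}$, the leave-one-out expansion from \Cref{lem:smooth M estimator} with $\hat H_{k+1}$ and $\nabla\ell(\hat\theta_{k+1};x,y)$ replaced by $V_{\theta^*}$ and $\nabla\ell(\theta^*;x,y)$ at cost $O_P(k^{-3/2})$, and the same case split on $\nabla\mcL(\theta^*)$. The only minor differences are cosmetic: you verify the data-dependent hypotheses of \Cref{lem:smooth M estimator} hold with probability tending to one (the paper assumes them) and obtain $\sqrt{k}\,\nabla\mcL(\hat\theta_k)=O_P(1)$ by a Lipschitz bound rather than the Delta method.
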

\begin{proof}[Proof of \Cref{thm:m-estimator}]
Applying Taylor's theorem to the marginal contribution, we have 
\begin{equation}
\begin{aligned}
\label{eqn: Taylor}
    \Delta(z,\mcD) =& \mcL(\hat\theta_{k}) - \mcL(\hat\theta_{k+1})=\nabla\mcL(\theta_{k})^T(\hat\theta_{k}-\hat\theta_{k+1})+\frac{1}{2}(\hat\theta_{k}-\hat\theta_{k+1})^T\nabla^2\mcL(\xi)(\hat\theta_{k}-\hat\theta_{k+1}),
\end{aligned}
\end{equation}
where $\xi = \alpha\hat\theta_{k}+(1-\alpha)\hat\theta_{k+1}$ for some $\alpha\in [0,1]$. Then notice that 
$$
\|\hat{H}_{k+1} -\frac{1}{k+1} \sum_{i=1}^{k+1} \nabla^2 \ell\left(\theta^*; x_i, y_i\right)\|_{op}\leq\frac{1}{k+1} \sum_{i=1}^{k+1}M(x_i,y_i)\|(\hat\theta_{k+1}-\theta^*)\|_2,
$$
from \Cref{lem: M estimator asymptotics}, we know that $\hat\theta_{k+1}-\theta^*_2=O_P(k^{-1/2})$, and by law of large number, $\frac{1}{k+1} \sum_{i=1}^{k+1}M(x_i,y_i)=O_P(1)$, hence $\hat{H}_{k+1} -\frac{1}{k+1} \sum_{i=1}^{k+1} \nabla^2 \ell\left(\theta^*; x_i, y_i\right)=O_P(k^{-1/2})$. On the other hand, from the central limit theorem, 
$$
\frac{1}{k+1} \sum_{i=1}^{k+1} \nabla^2 \ell\left(\theta^*; x_i, y_i\right) - \E\nabla^2 \ell\left(\theta^*; x_i, y_i\right)=O_P(k^{-1/2}).
$$
Use the fact that $\E\nabla^2 \ell\left(\theta^*; x_i, y_i\right) = \nabla\E\nabla \ell\left(\theta^*; x_i, y_i\right)=V_{\theta^*}$, we conclude that $\hat{H}_{k+1}-V_{\theta^*} = O_P(k^{-1/2})$ and hence 
\begin{align}
    \label{eqn: inverse H}
    \hat{H}_{k+1}^{-1}-V_{\theta^*}^{-1} = O_P(k^{-1/2})
\end{align}
with the fact that $V_{\theta^*}$ is nonsingular.

Moreover, since $\hat\theta_{k+1}-\theta^*=O_P(k^{-1/2})$, and $\nabla\ell, \nabla^2\ell$ are continuous and Lipscitz, we know that 
\begin{align}
    \label{eqn: grad 1 diff}
    \nabla \ell(\hat{\theta}_{k+1}; x_{k+1}, y_{k+1})-\nabla \ell(\theta^*; x_{k+1}, y_{k+1})=O_P(k^{-1/2}),
\end{align}
and
\begin{align}
    \label{eqn: grad 2 diff}
     \nabla^2 \ell(\hat{\theta}_{k+1}; x_{k+1}, y_{k+1})-\nabla^2 \ell(\theta^*; x_{k+1}, y_{k+1})=O_P(k^{-1/2}).
\end{align} 
Based on \eqref{eqn: inverse H}-\eqref{eqn: grad 2 diff}, we shall conclude that 
$
\delta_{k+1,k+1} = O_P(k^{-1}),
$
and then use \Cref{asm:smooth}, 
\begin{align}
\label{eqn: C function}
    \max_{1\leq i\leq k}C\left(\frac{3}{2}\delta_{k+1,k+1};x_i,y_i\right)-1=O_P(k^{-1}).
\end{align}
Combining \eqref{eqn: inverse H}-\eqref{eqn: C function} with \Cref{lem:smooth M estimator}, we can conclude that

$$
\hat\theta_{k}-\hat\theta_{k+1} = \frac{1}{k+1}V_{\theta^*}^{-1}\nabla\ell(\theta^*;x_{k+1},y_{k+1})+O_P(k^{-3/2}).
$$
Then plug it into \eqref{eqn: Taylor}, we know that 
\begin{equation}
\label{eqn: Delta form}
\Delta(z,\mcD) = \frac{1}{k+1}\nabla\mcL(\hat\theta_k)^T(V_{\theta^*}^{-1}\nabla\ell(\theta^*;x_{k+1},y_{k+1})+O_P(k^{-3/2}))+O_P(k^{-2}).    
\end{equation}

Applying the Delta method to $\nabla\mcL(\hat\theta_k)$, the limiting distribution of $\nabla\mcL(\hat\theta_k)$ is 
\begin{equation}
\label{eqn: limit mcL}
    \sqrt{k}(\nabla\mcL(\hat\theta_k)-\nabla\mcL(\theta^*))\xrightarrow[]{d}\mcN\left(0, \nabla^2\mcL(\theta^*)V_{\theta^{*}}^{-1}\E_{\mcQ}[\nabla\ell(\theta^*; x, y)\nabla\ell(\theta^*; x, y)^T]V_{\theta^{*}}^{-1}\nabla^2\mcL(\theta^*)\right).
\end{equation}
If $\nabla\mcL(\theta^*) \neq 0$, then \eqref{eqn: Delta form} implies
$$
\Delta(z,\mcD) = k^{-1}\nabla\mcL(\hat\theta^*)^TV_{\theta^*}^{-1}\nabla\ell(\theta^*;x_{k+1},y_{k+1})+O_P(k^{-3/2}).
$$
And if $\nabla\mcL(\theta^*) = 0$, then $\sqrt{k}\nabla\mcL(\hat\theta_k) = O_P(1)$ and \eqref{eqn: Delta form} implies
$$
\Delta(z,\mcD) = k^{-3/2}(\sqrt{k}\nabla\mcL(\hat\theta_k))^TV_{\theta^*}^{-1}\nabla\ell(\theta^*;x_{k+1},y_{k+1})+O_P(k^{-2}),
$$
which finishes the proof.
\end{proof}

\begin{remark}
Although we introduce many assumptions to establish \Cref{thm: m estimator appendix}, many of them are natural and could be verified. For example, for a logistic regression model, the loss function is $\ell(\theta; x, y) = -\theta^Txy+\log(1+\exp(\theta^Tx))$, so the first and second order derivatives are
\begin{equation}
    \nabla \ell(\theta; x, y) = -x\left(y-\frac{\exp(\theta^Tx)}{1+\exp(\theta^Tx)}\right), \quad \nabla^2 \ell(\theta; x, y)= \frac{\exp(\theta^Tx)}{(1+\exp(\theta^Tx))^2}xx^T.
\end{equation}
Moreover, the third derivative can be written as 
\begin{equation}
    \nabla^3 \ell(\theta; x, y) = \frac{\exp(\theta^Tx)(1-\exp(\theta^Tx))} {(1+\exp(\theta^Tx))^3}x\otimes x\otimes x.
\end{equation}
Then the Lipschitz condition in \Cref{lem: M estimator asymptotics} can be guaranteed if the $\E_\mcQ[\|x\|_2^3]<\infty$. Suppose the metric is either $\mcL_1(\theta) = \E_{(x', y')\sim\mcQ}[\ell(\theta; x', y')]$ or $\mcL_2(\theta) = \frac{1}{m}\sum_{j=1}^{m}\ell(\theta; x'_j, y'_j)$, \Cref{asm:metric} and the requirement $\E_{\mcQ}[\|\nabla^2\ell(\theta;x,y)\|]\leq\infty$ is also implied by $\E_\mcQ[\|x\|_2^3]<\infty$. For the requirement on function $C$, the upper bound in \Cref{lem:smooth M estimator} could be verified by \eqref{eqn: C function}. In particular, since $\nabla^2\ell(\theta; x, y)$ is a rank-one matrix and depends on $\theta$ only through the scale, we have 
\begin{equation*}
\begin{aligned}
     C(u;x,y) =& \sup_{\|\theta_1-\theta_2\|_2\leq u} \frac{\exp(\theta_1^Tx)}{(1+\exp(\theta_1^Tx))^2}\frac{(1+\exp(\theta_2^Tx))^2}{\exp(\theta_2^Tx)}\\
     \leq&\sup_{\|\theta_1-\theta_2\|_2\leq u}\frac{\exp(\theta_1^Tx)}{\exp(\theta_2^Tx)}\sup_{\|\theta_1-\theta_2\|_2\leq u}\frac{\exp(\theta_2^Tx)^2}{\exp(\theta_1^Tx)^2}\\
     \leq&\exp(3u\|x\|_2).
\end{aligned}
\end{equation*}
Hence \Cref{asm:smooth} can be fulfilled by assuming $\|x\|_2$ is bounded. We want to highlight that many assumptions can be relaxed, and here we only state a basic version for clarity.
\end{remark}

\clearpage
\section{Likelihood-based scaling law estimator} \label{app:parametric-estimator}

Our objective when fitting the likelihood-based estimator is the following loss function,

\begin{equation*}
    \min_{c, \alpha, \sigma, \beta} \; \mcL(c, \alpha, \sigma, \beta) = \sum_{i = 1}^m \mathrm{NLL}(\Delta(z, \mcD_i), |\mcD_i|; c, \alpha, \sigma, \beta),
\end{equation*}

where the negative log-likelihood for each marginal contribution is based on Gaussian distributions with certain shared parameters:

\begin{equation*}
    \mathrm{NLL}(\Delta, k; c, \alpha, \sigma, \beta) = \frac{1}{2} \log (2\pi) + \frac{1}{2} \log (\sigma^2) - \frac{\beta}{2} \log (k) + \frac{\left( \Delta - c k^{-\alpha} \right)^2}{2 \sigma^2 k^{-\beta}}.
\end{equation*}

The objective is non-convex, and we find that directly optimizing over the four parameters leads to unstable results. We therefore use another approach that involves solving analytically for two of the four parameters. Given values for $\alpha$ and $\beta$, we can derive optimal values for $c$ and $\sigma$ in an analytic fashion. Beginning with $c$, we can see that its optimal value does not depend on $\sigma$, so we can write the optimal value $c(\alpha, \beta)$ as follows:


\begin{equation*}
    c(\alpha, \beta) 
    = \argmin_c \; \sum_{i = 1}^m k_i^{\beta} \left( \Delta_i - c k_i^{-\alpha} \right)^2
    = \frac{\sum_i k_i^{\beta - \alpha} \Delta_i}{\sum_i k_i^{\beta - 2 \alpha}}.
\end{equation*}

Next, we consider the optimal value for $\sigma^2$ given fixed values for $\alpha, \beta$ and the optimal value $c(\alpha, \beta)$. We solve this problem by reparameterizing in terms of the precision $\tau = 1 / \sigma^2$, and we use the solution to define the optimal value $\sigma(\alpha, \beta)$:


\begin{align*}
    \sigma(\alpha, \beta)
    = \left( \argmin_\tau \; \sum_{i = 1}^m - \log (\tau) + \tau \frac{\left( \Delta_i - c(\alpha, \beta) k_i^{-\alpha} \right)^2}{k^{-\beta}} \right)^{-\frac{1}{2}}
    = \sqrt{\frac{\sum_i k_i^\beta \left( \Delta_i - c(\alpha, \beta) k_i^{-\alpha} \right)^2}{m}}.
\end{align*}

Using these analytic expressions, we can define our objective solely over the exponents $\alpha$ and $\beta$. We therefore simplify the fitting process by optimizing the following loss:

\begin{equation*}
    \mcL(\alpha, \beta) = \mcL\left(c(\alpha, \beta), \alpha, \sigma(\alpha, \beta), \beta\right).
\end{equation*}

We optimize the two-parameter objective with Adam \citep{kingma2014adam}, and we find that this leads to stable results with a moderate number of gradient steps.

\textbf{Amortized estimator.} For the amortized estimator, we use several approaches to stabilize and improve the model's performance. Because we observe a wide range of values for the $c(z)$ coefficient from the non-amortized estimator, we set up the network to separately predict the sign and log-magnitude of this coefficient, or $\mathrm{sign}(c(z))$ and $\log |c(z)|$. We use gradient clipping, we clamp the predicted $\beta$ values to have a maximum value of $\exp(2)$ to avoid variance predictions too close to 0 at large $k$ values, and we apply a small prior to the predicted $\alpha(z)$ values via a penalty $(\alpha(z) - 1)^2$.

\clearpage
\section{Additional Experimental Results}
\label{appendix: experiments}

\subsection{Evidence for individualized data scaling laws}
\label{appendix: evidence}

Here, we provide experimental details for the results shown in \Cref{sec: evidence}. For SVMs we use the RBF kernel, and for neural networks we set the network width as 32, weight decay as $0.01$, and train using the Adam optimizer \cite{kingma2014adam} with learning rate $0.01$ for $200$ epochs. All of our models are run with scikit-learn package in Python. For each setting, we evaluated the model performance on a held-out test dataset of size $1000$. In addition, when generating the preceding dataset, we enforced the class balance to avoid degenerate scenarios where certain classes are not represented.

First, we show the estimated $\log|{c}(z)|$ and the relationship between $\alpha(z)$ and $\log| c(z)|$ in \Cref{fig: c}. The estimated $c(z)$ values exhibit a significant long-tail behavior in the original space, but we find that it has a well-shaped distribution in log-space. Moreover, we observe a strong positive correlation between $\alpha(z)$ and $\log |c(z)|$. This can be explained by the exponential effect of $\alpha(z)$ on $\psi_k(z)$: if $\alpha(z)$
increases by 1, $c(z)$ needs to increase by a factor of $k$
to keep $\psi_k(z)$ unchanged, therefore it is more reasonable to look at $\log |c(z)|$ values.
Similar to $\alpha (z)$, we also observe significant heterogeneity in the $\log|c(z)|$ values.
\begin{figure}[ht]
    \centering
    \subfigure[Histogram of estimated $\log|\hat c(z)|$]{\includegraphics[width=0.49\linewidth]{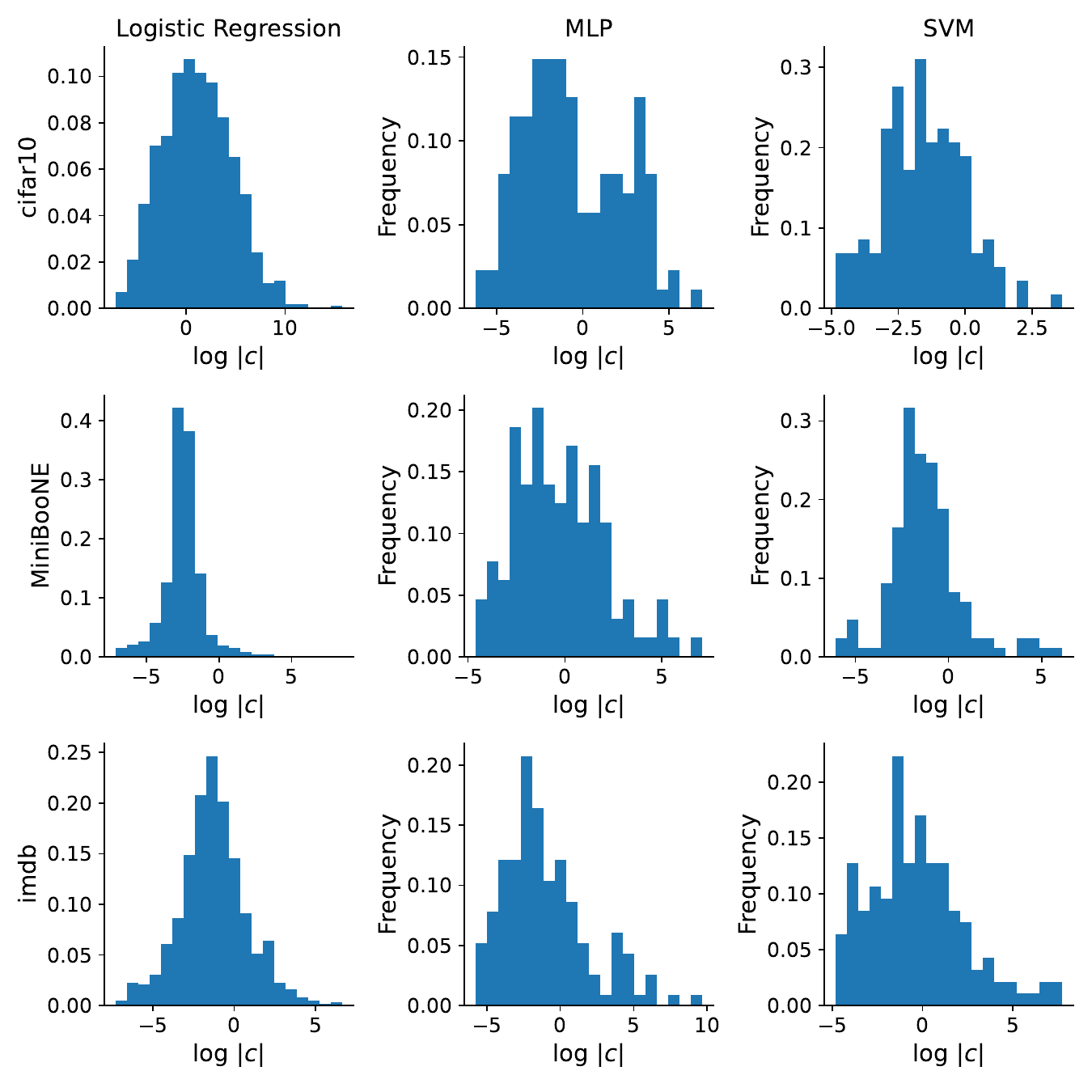}}
    \subfigure[Relationship between $\alpha$ and $\log| c(z)|$]{\includegraphics[width=0.49\linewidth]{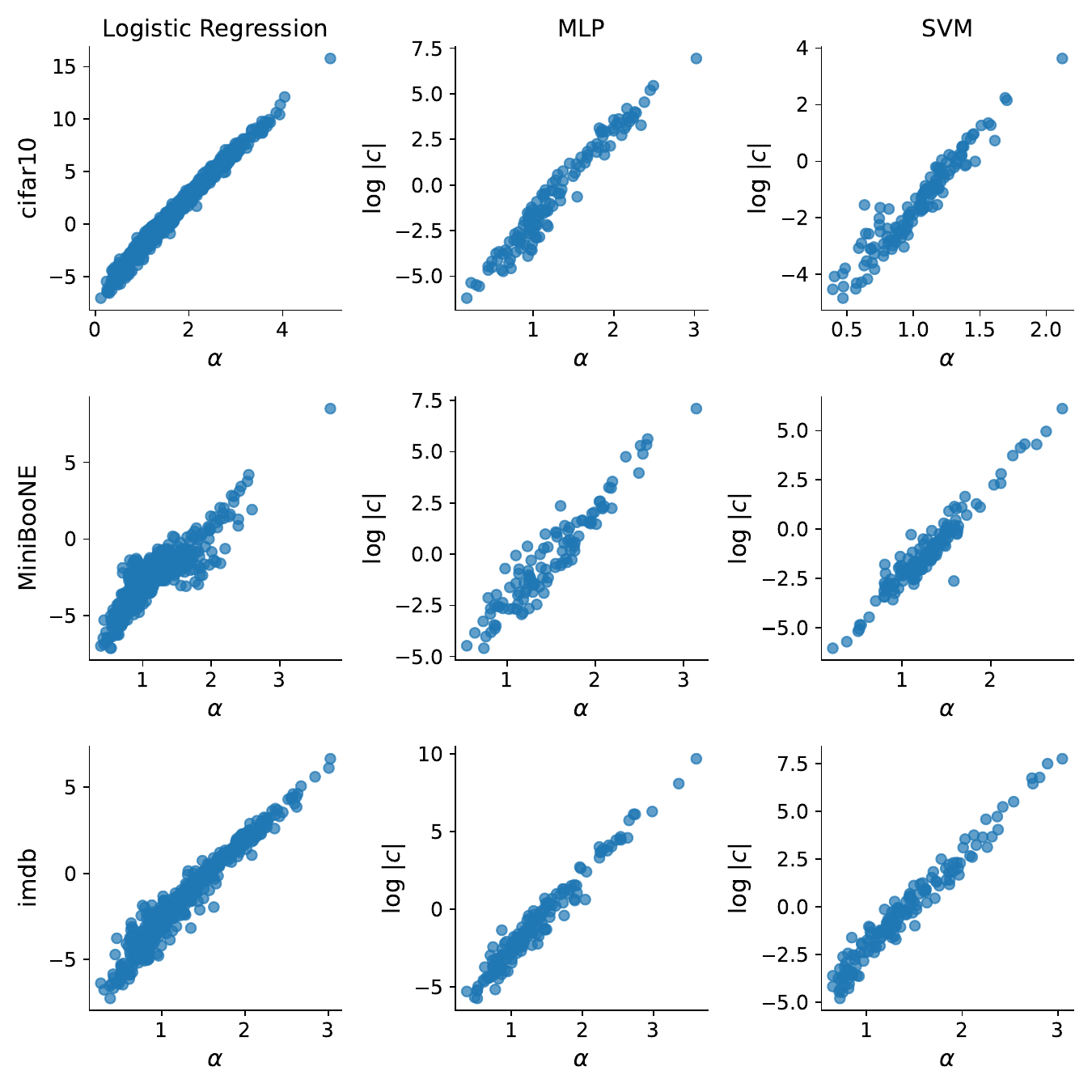}}
    \caption{\textbf{Histogram of estimated $\log|c(z)|$ and the relationship between $\alpha(z)$ and $\log| c(z)|$}: The estimated $c(z)$ values exhibit a well-shaped distribution in the log scale, and we observe a strong correlation between $\alpha(z)$ and $\log| c(z)|$. Similar to \Cref{fig: alpha}, we also exclude points with $R^2<0.8$ to make sure the estimates are reliable. }
    \label{fig: c}
\end{figure}

In \Cref{fig: r2}, we observe that most of the $R^2$ values are very close to 1, but there is still a small portion of points whose $R^2$ is below 1. We want to highlight that this is largely due to the finite sample error of estimating the mean at each cardinality. In \Cref{fig:shift}, we show the empirical distribution of $R^2$ using different numbers of samples, and we can observe that the distribution converges to 1 as the sample size increases; we generally expect all data points' $R^2$ values to be close to 1 when we have an infinite number of samples, which is consistent with our scaling law in \cref{eq:scaling-law}. Moreover, in \Cref{fig:r2_psi}, we examine the relationship between $R^2$ and $\log|\psi_k(z)|$. It is observed that for points with smaller $R^2$, they also tend to have smaller marginal contributions; this is most apparent when visualizing $\psi_k(z)$ for $k = 100$ and the average of $\psi_k(z)$ in the range from $100$ to $1000$ (\Cref{fig:r2_psi} left and bottom). This observation suggests that the
poorly-fitting points may not be due to the scaling law not applying, but to estimation noise that is relatively large when viewed in log-space.

\begin{figure}[ht]
    \centering
    \includegraphics[width=0.3\linewidth]{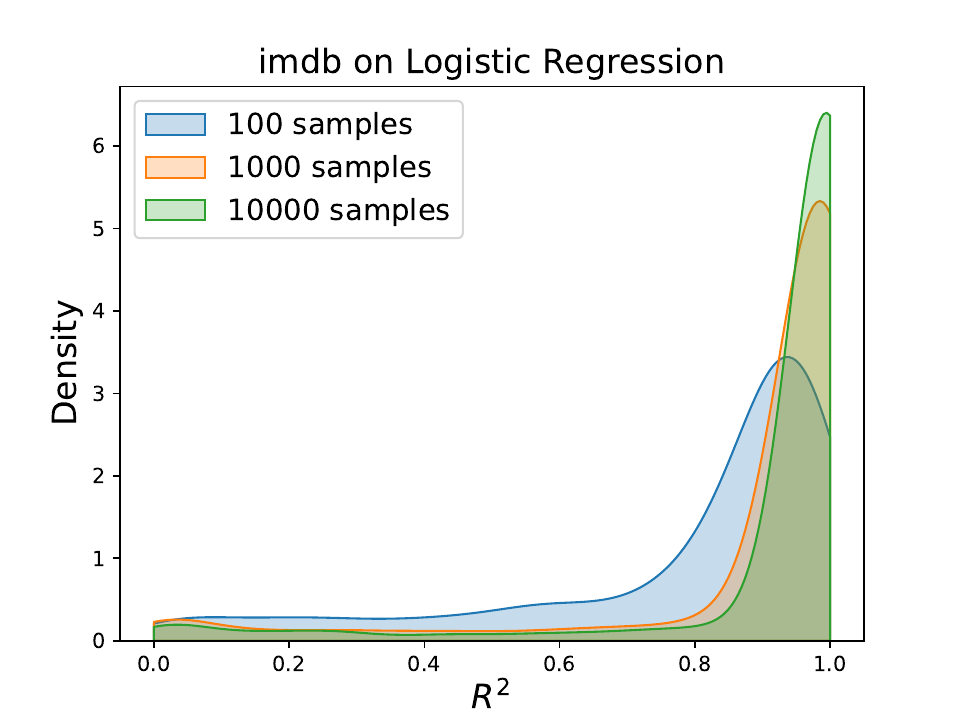}
        \includegraphics[width=0.3\linewidth]{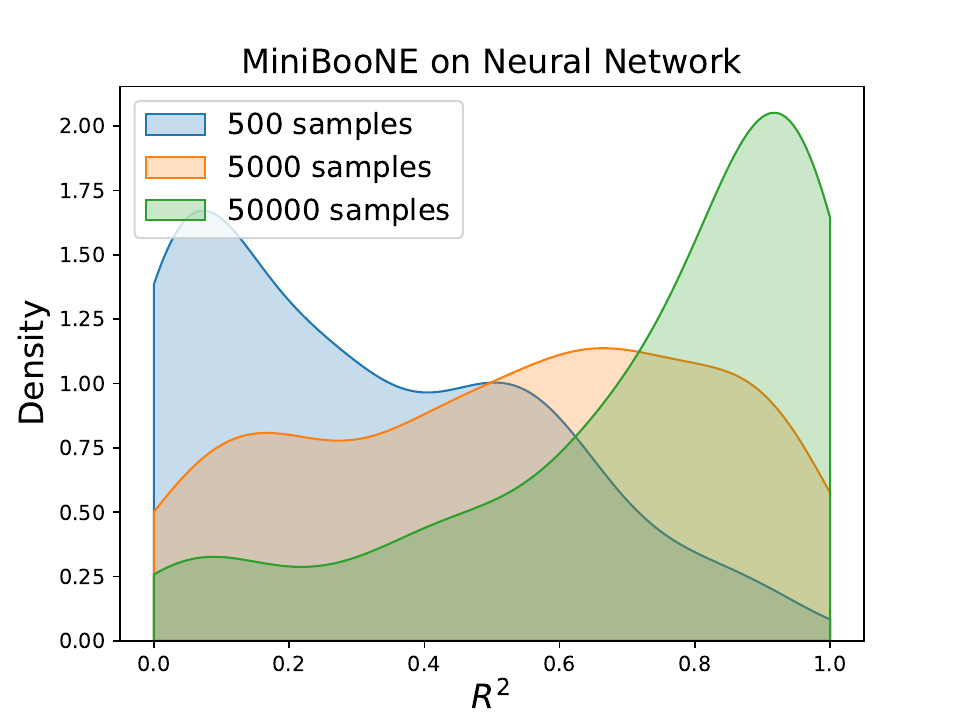}
    \includegraphics[width=0.3\linewidth]{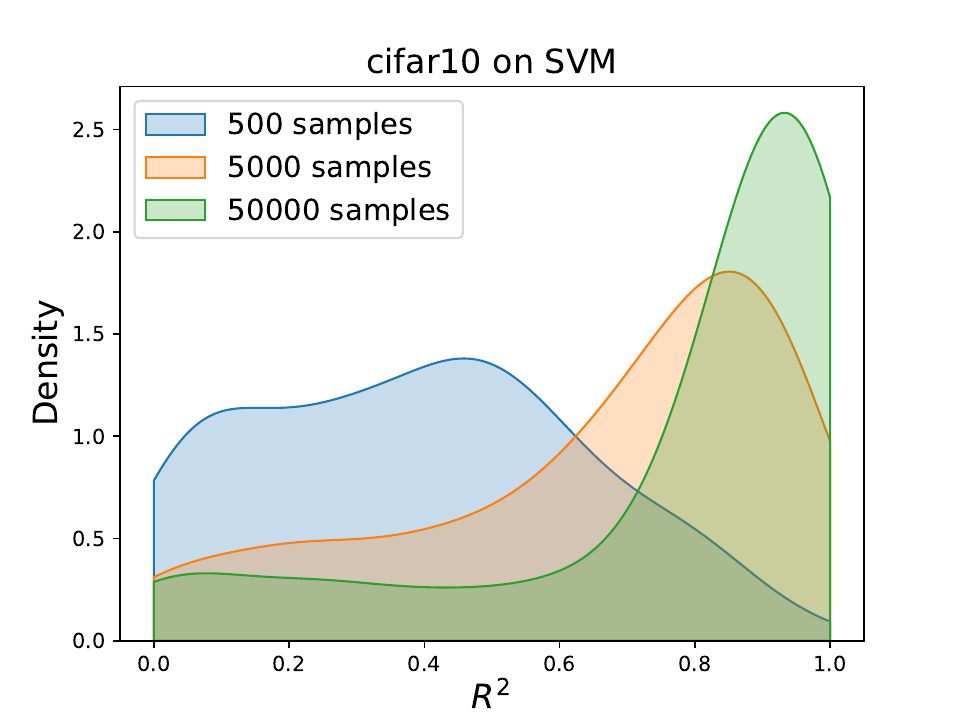}

    \caption{\textbf{Density of $R^2$ score with different numbers of samples.} As the sample size increases, the distribution of $R^2$ is concentrating towards 1.}
    \label{fig:shift}
\end{figure}
\begin{figure}
    \centering
    \subfigure[Relationship between $R^2$ and $\log|\psi_{100}(z)|$]{\includegraphics[width=0.49\linewidth]{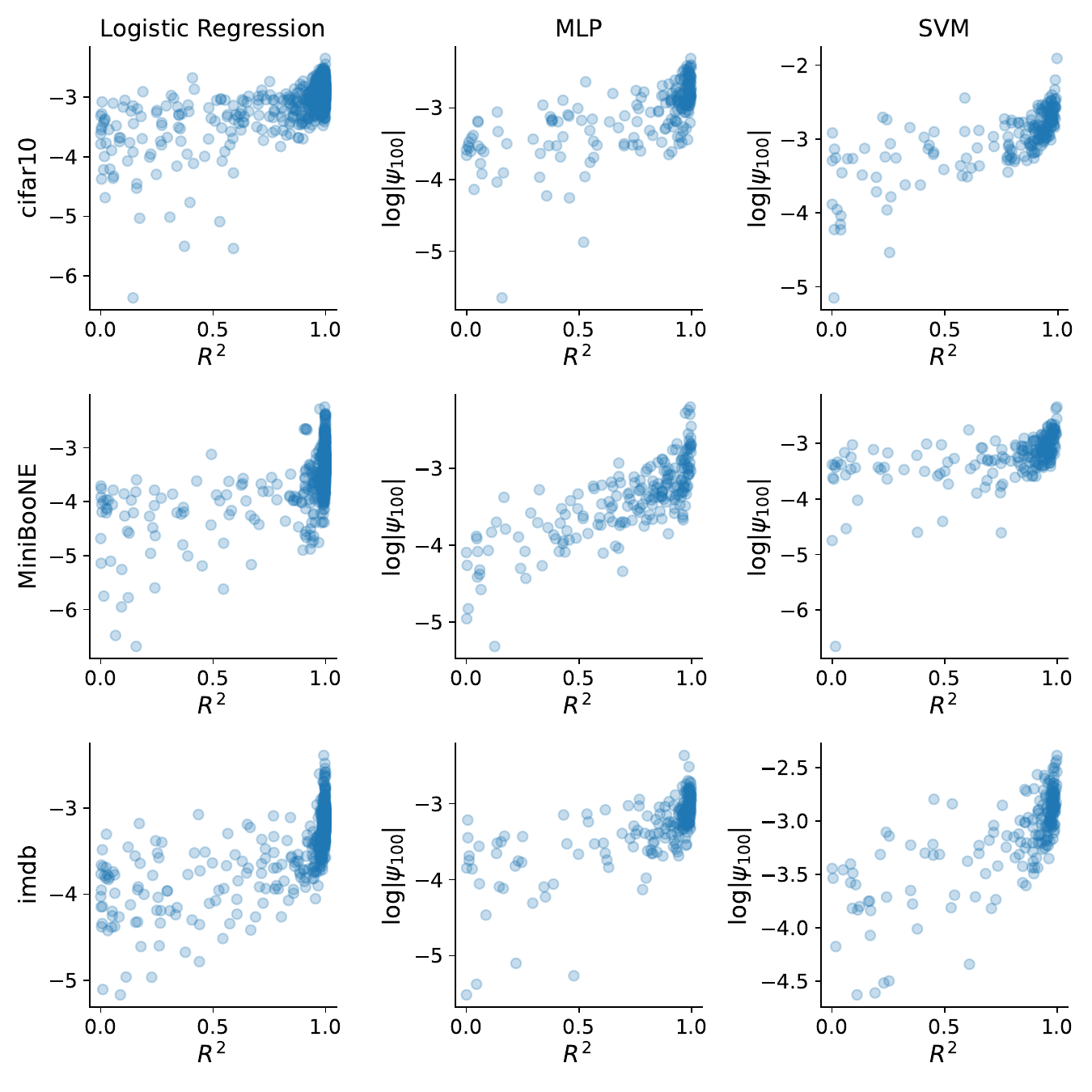}}
    \subfigure[Relationship between $R^2$ and $\log|\psi_{1000}(z)|$]{\includegraphics[width=0.49\linewidth]{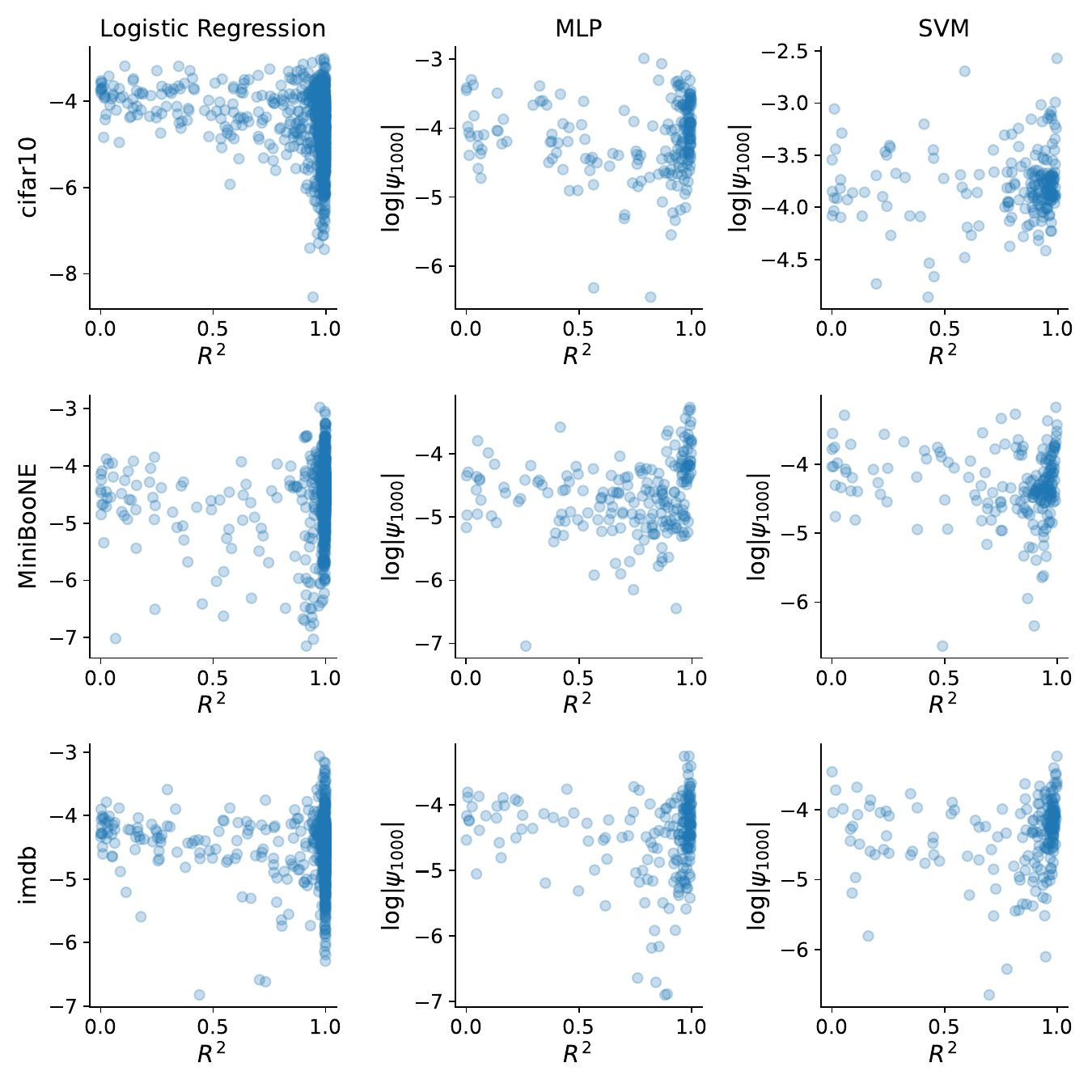}}
    \subfigure[Relationship between $R^2$ and $\log|\psi(z)|$, here $\psi(z)$ is defined as the average $\psi_k(z)$ over 10 log-spaced cardinalities between $k=100$ and $k=1000$.]{\includegraphics[width=0.49\linewidth]{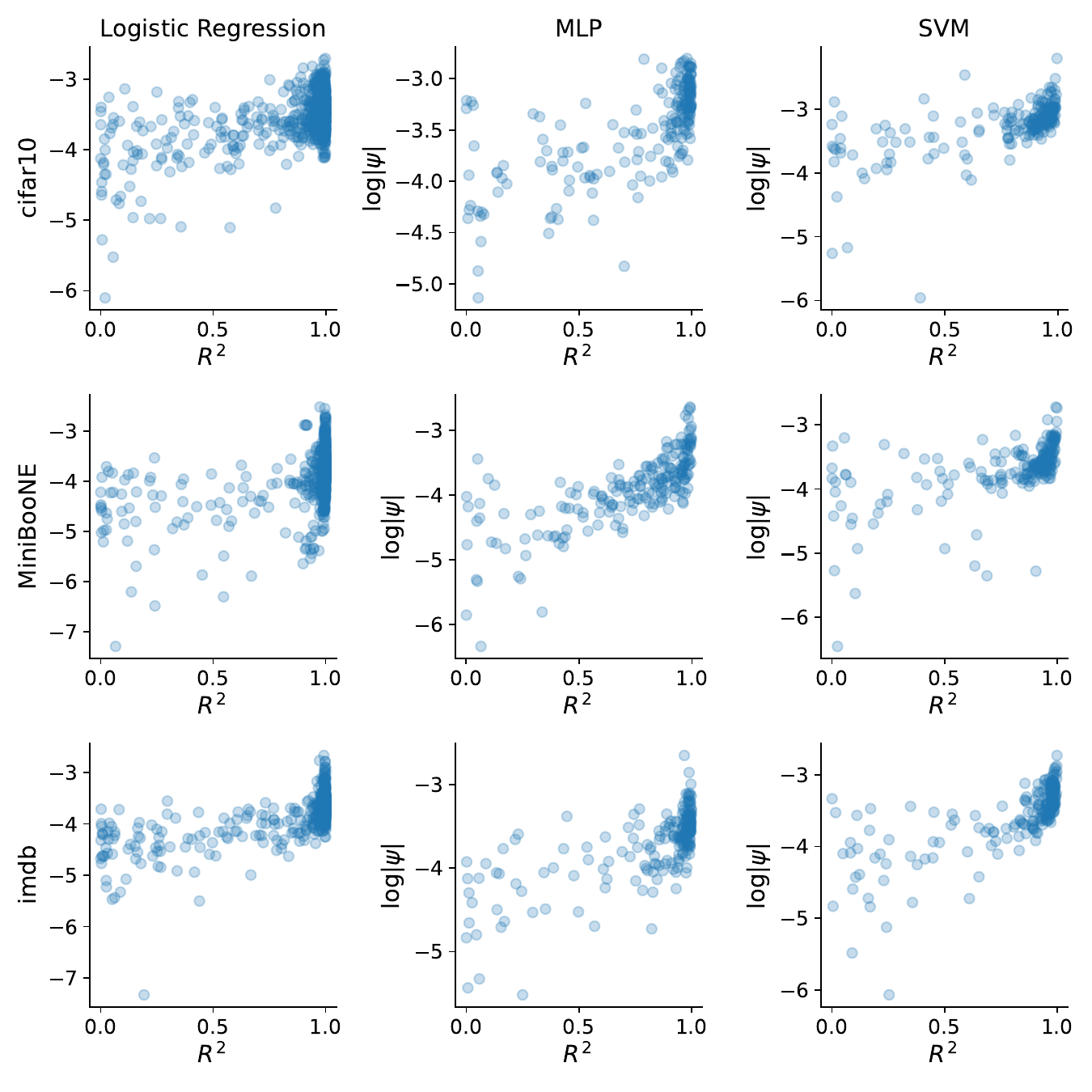}}
    \caption{\textbf{The relationship between $R^2$ score and the marginal contribution.} We examine the marginal contribution by evaluating $\log|\psi_k(z)|$ at $k=100$, $k=1000$, and an average over $10$ log-spaced cardinalities between $k=100$ and $k=1000$. For points with small $R^2$, they tend to have small marginal contributions. For $k=1000$, the marginal contributions of all points are dominated by the sampling error, therefore it is hard to distinguish points with different $R^2$.}
    \label{fig:r2_psi}
\end{figure}

Besides the evidence of the scaling law for the marginal contribution, we also provide evidence for the scaling law of the marginal contribution's variance, i.e., we aim to show that 
\begin{equation}
    \label{eqn:scaling var}
    \var(\Delta(z,\mcD)\mid |\mcD|=k) \approx \frac{\sigma^2(z)}{k^{\beta(z)}}.
\end{equation}

Similar to our approach in \Cref{sec:validation}, we can verify this scaling law by taking the following log transform,
\begin{equation}
\label{eqn:scaling var log}
    \log\var(\Delta(z,\mcD)\mid |\mcD|=k) \approx \log \sigma^2(z) - \beta(z)\log k,
\end{equation}
and evaluating the $R^2$ score of fitting a simple linear regression in log-space. The experiment results are shown in \Cref{fig:r2 var}, and we find that the $R^2$ scores for the variance are very close to 1 for all logistic regression and MLP models. For SVM, the computed $R^2$ are mostly close to 1, but there exists a portion of points that have a relatively small $R^2$. A possible reason is that the classifier returned by SVM depends only on the support vectors, hence whether or not the point is a support vector makes the marginal contribution differ significantly. Therefore, although the mean of those marginal contributions aligns well with the scaling law, the variance could be relatively unstable and result in a poor fit.

\begin{figure}
    \centering
    \includegraphics[width=0.5\linewidth]{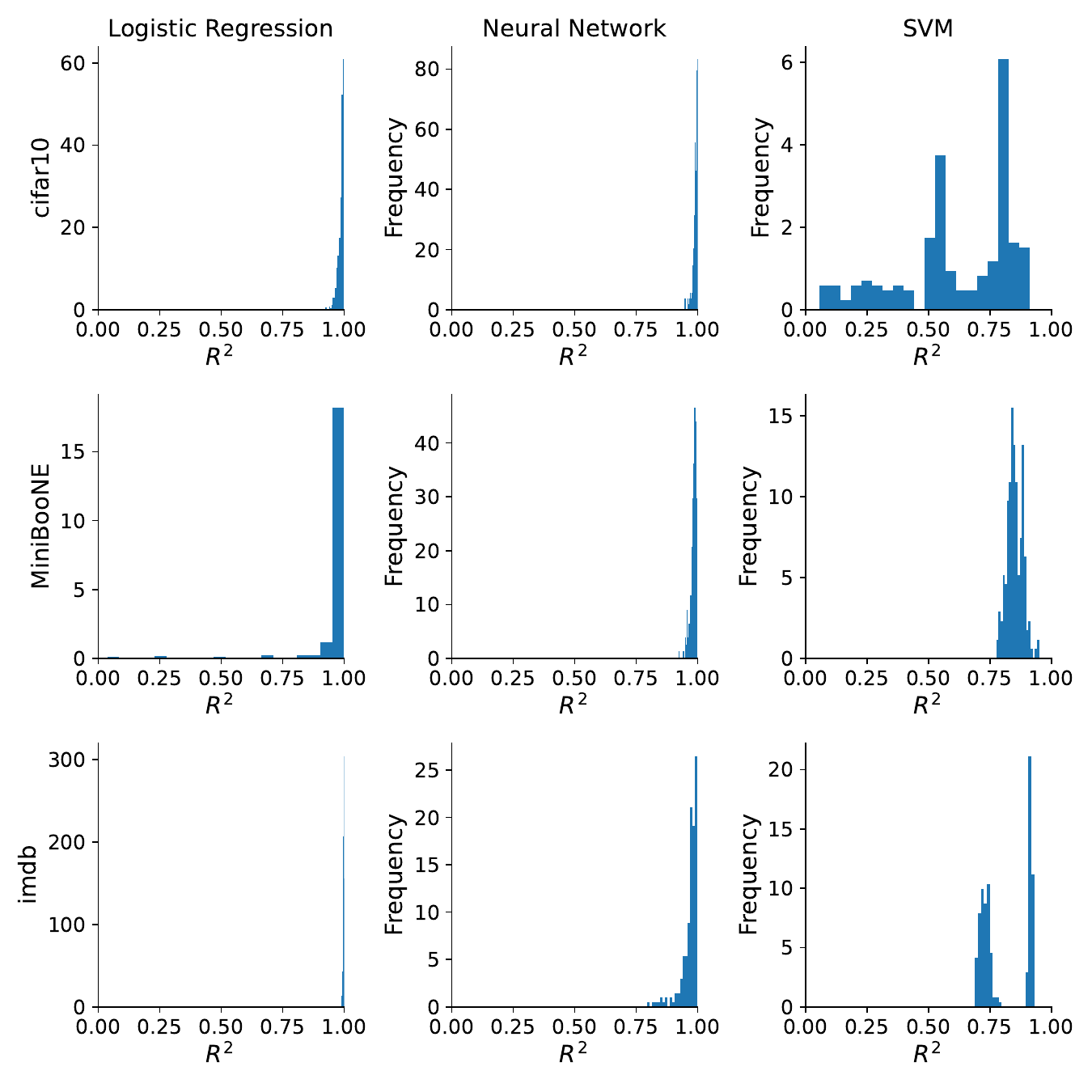}
    \caption{\textbf{Histogram of $R^2$ score of fitting the scaling law fitting on the variance:} for each setting, we compute the variance of sampled $\Delta(z, \mcD)$ with the same size $|\mcD|$, and then fit a simple linear regression according to \eqref{eqn:scaling var log} to obtain $R^2$}
    \label{fig:r2 var}
\end{figure}

Next, we examine the distribution of fitted parameters $\beta(z)$ and $\log \sigma^2(z)$. Similar to the $c(z)$ values, here we also observe a long-tail behavior of $\sigma^2(z)$ and hence plot $\log \sigma^2(z)$ instead. As we can observe, the variance of logistic regression models is decaying at a very fast rate, the majority of $\beta(z)$ values lie between $2$ and $4$. By contrast, for MLPs and SVMs, $\beta(z)$ is mostly between 1 and 2. Therefore, the signal-to-noise ratio of the marginal contribution is much lower for these models, making the estimation of the mean $\psi_k(z)$ at each cardinality more challenging. 

\begin{figure}
    \centering
    \subfigure[Histogram of $\hat\beta(z)$]{\includegraphics[width=0.49\linewidth]{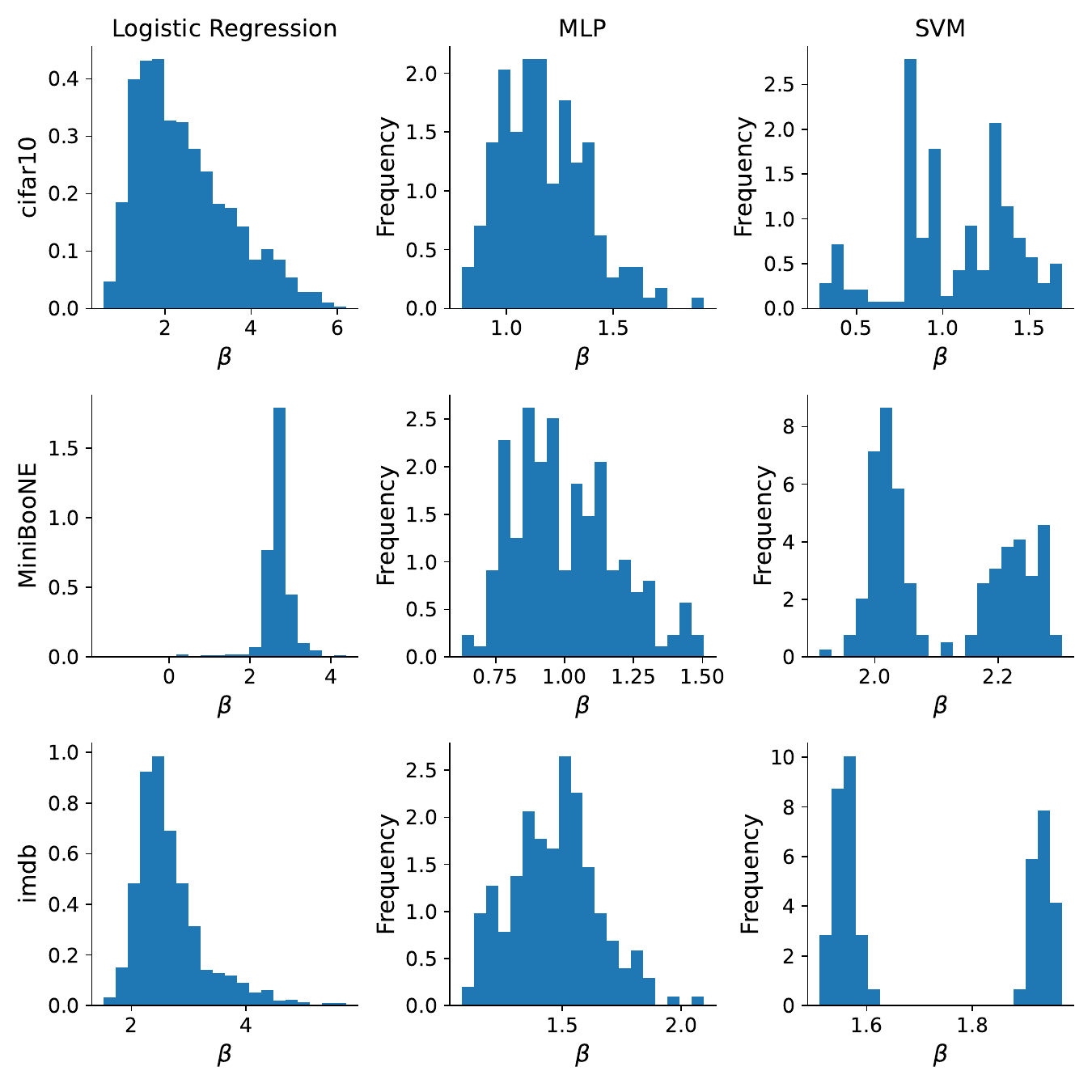}}
    \subfigure[Histogram of $\log \hat \sigma^2(z)$]{\includegraphics[width=0.49\linewidth]{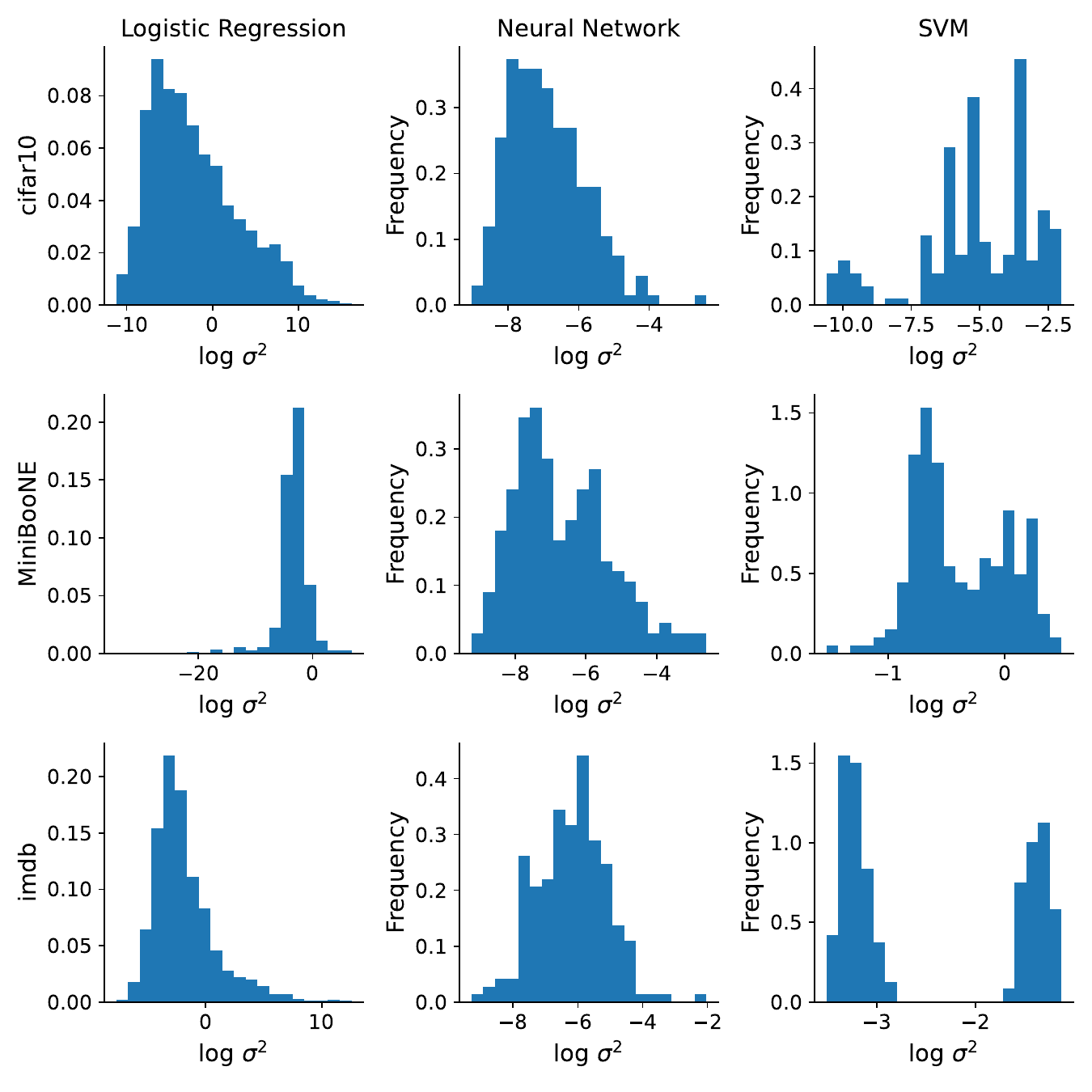}}
    \caption{\textbf{Histogram of estimated $\hat\beta(z)$ and $\log \hat \sigma^2(z)$}: the logistic regression model has a much higher $\beta$ compared with neural networks and SVM.}
    \label{fig: sigma and beta}
\end{figure}

Finally, we also show the relationship between distance to decision boundary and $\log|c(z)|$ in \Cref{fig: dist c}. Surprisingly, we observe a very different behavior for the logistic regression model on two different datasets. For the MiniBooNE dataset, the heterogeneity of $\log |c(z)|$ mainly comes from the points that are close to the decision boundary, and there is no significant trend when points are far from the decision boundary, as they all concentrate around 0. In contrast, we observe a clear positive correlation between the distance and the estimated $\log|c(z)|$ for the IMDB dataset. This difference implies that the scaling law parameters strongly depend on the data distribution, and they may have fundamentally different behavior when evaluating the same model for different datasets. 

\begin{figure}[ht]
    \centering
    \subfigure[MiniBooNE]{\includegraphics[width=0.4\linewidth]{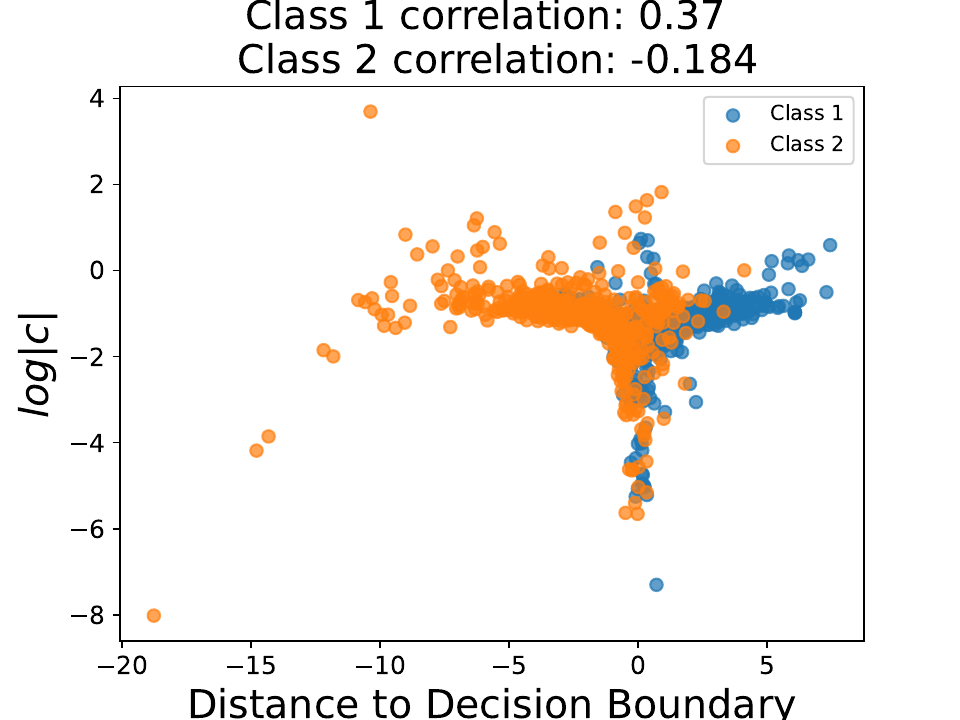}}
    \subfigure[IMDB]{\includegraphics[width=0.4\linewidth]{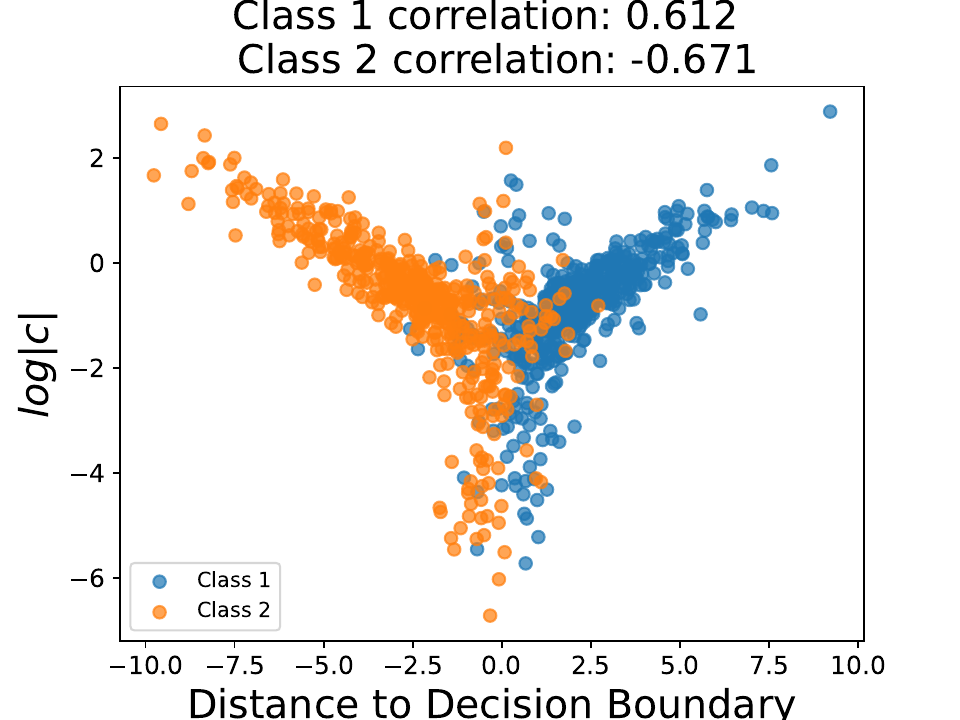}}
    \caption{\textbf{Distance to decision boundary vs. $\log|c(z)|$ for logistic regression model}. We fit a logistic regression model on all points being evaluated and then compute the distance of each data point to the decision boundary. The sign of distance is kept to distinguish points from each class.
    }
    \label{fig: dist c}
\end{figure}

\clearpage
\subsection{Accuracy of efficient scaling law estimators} \label{app:estimator-accuracy}

Here, we show three additional results related to our scaling law estimators. Like our results in \Cref{sec:estimation-accuracy}, all are conducted using the IMDB dataset with logistic regression. The first experiment provides intuition for our metrics by showing a scatterplot of the scaling law's predictions at each cardinality, and the true expectation $\psi_k(z)$ for each data point (estimated using 1000 samples). This result is shown in \Cref{fig:estimator-scatter}, where we observe very strong predictive accuracy for four dataset sizes. We report three measures of the prediction accuracy, where $\rho$ denotes Pearson correlation and $\tau$ denotes Spearman correlation. All three metrics are close to 1, particularly for the larger dataset sizes $k$.

\begin{figure}[h]
    \centering
    \includegraphics[width=\linewidth]{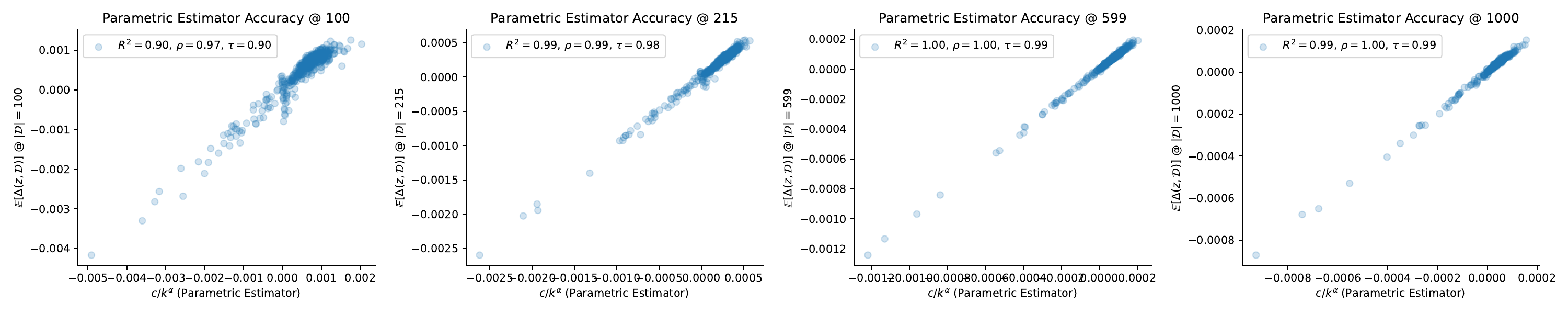}
    \caption{\textbf{Scaling law estimation accuracy scatterplots.} For four dataset sizes $k$, we plot the scaling law's predictions for the marginal contribution against the true expectations $\psi_k(z)$.}
    \label{fig:estimator-scatter}
\end{figure}

The next result is an expanded version of \Cref{fig:estimator-accuracy}, where we show the scaling law's accuracy at different $k$ values with different numbers of samples and according to three metrics: the $R^2$ score, Pearson correlation, and Spearman correlation. All three metrics are calculated using expectations $\psi_k(z)$ estimated with 1000 samples, similar to the previous result and our validation in \Cref{sec:validation}. Within the range where the scaling law is fit ($k \in [100, 1000]$), we generally observe high accuracy, particularly at larger cardinalities that have less noise, and when each estimator is fit with more samples. As shown in the main text, we see that the Pearson and Spearman correlation remain relatively high even as we extrapolate to dataset sizes an order of magnitude larger than where the scaling law is fit. However, the $R^2$ score quickly degrades once we exceed $k = 1000$, which we find is due to the predicted contributions not shrinking as quickly as the empirical ones.

\begin{figure}[h]
    \centering
    \includegraphics[width=\linewidth]{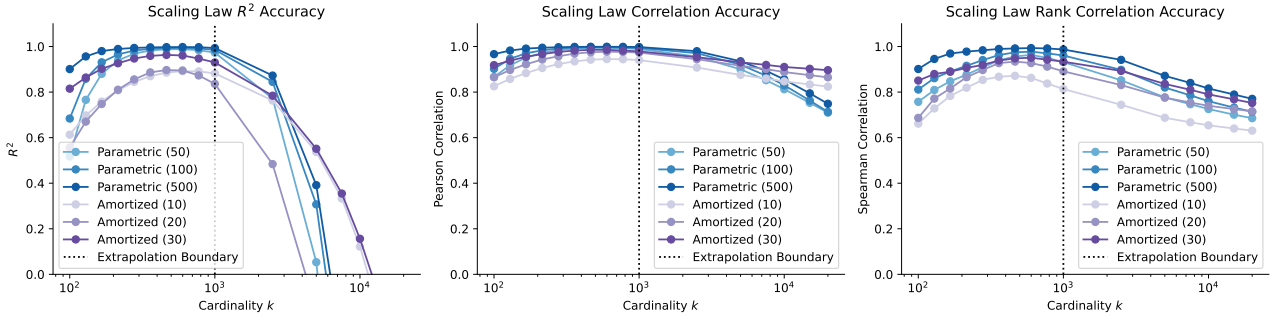}
    \caption{\textbf{Scaling law estimation accuracy at multiple cardinalities.} We evaluate the scaling law's predictions at cardinalities up to an order of magnitude larger than where it is fit, and according to three accuracy metrics: $R^2$, Pearson correlation and Spearman correlation.}
    \label{fig:estimator-accuracy-full}
\end{figure}

Finally, we show a more thorough analysis of the likelihood-based estimator's convergence characteristics. \Cref{fig:estimator-convergence} shows the accuracy at each cardinality via a colored line as a function of the number of samples used to fit the estimator. We also separately show results for interpolation ($k \in [100, 1000]$) and extrapolation ($k > 1000$). Overall, we see that for most interpolation cardinalities, the $R^2$ score converges to 1 within relatively few samples; the more difficult cases in the top row the smallest cardinalities. On the other hand, in the bottom row, we see that the performance has significant room for improvement even after fitting with 100 samples.

\begin{figure}[h]
    \centering
    \includegraphics[width=\linewidth]{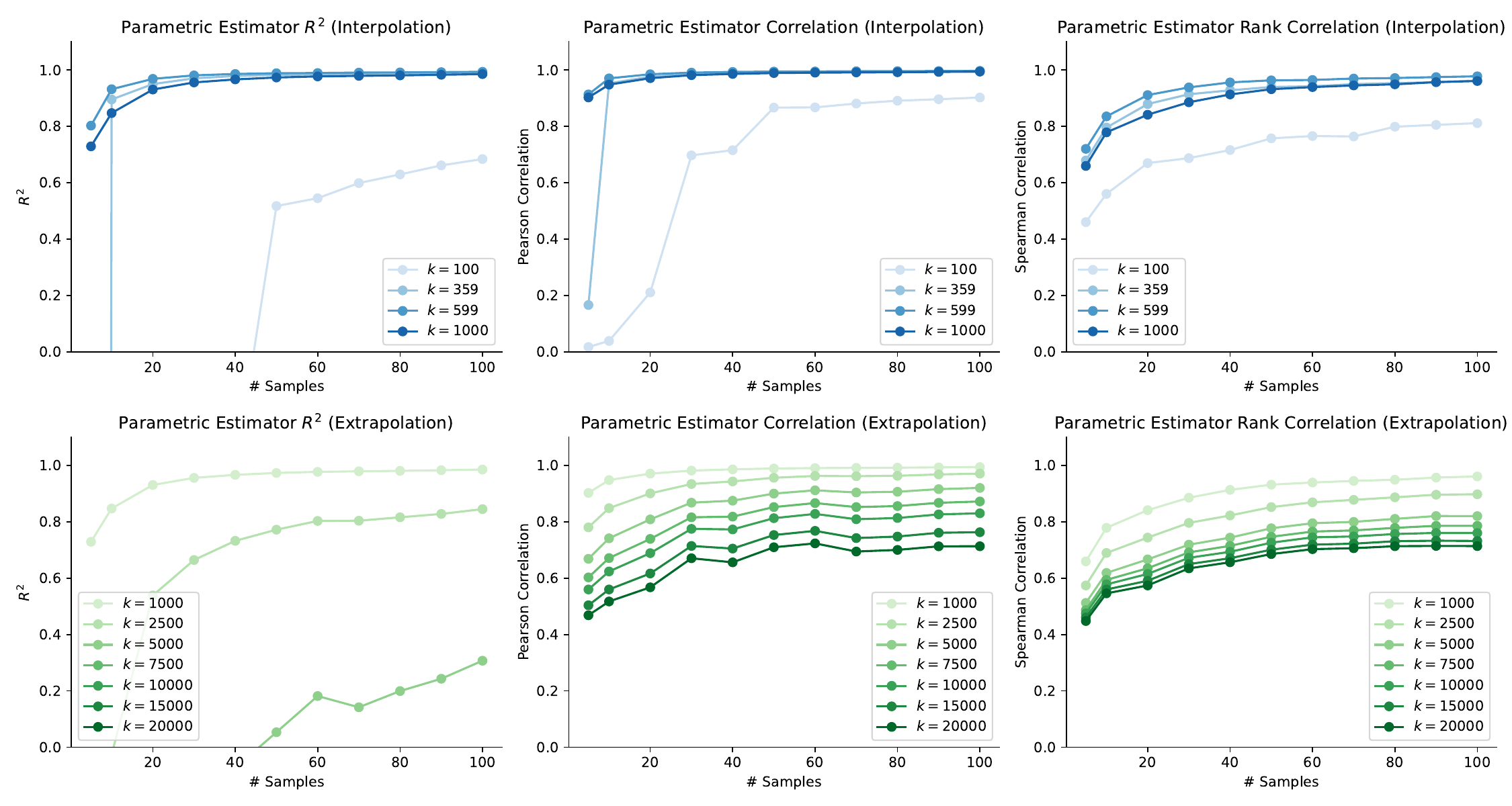}
    \caption{\textbf{Likelihood-based scaling law estimator convergence.} We verify that the likelihood-based estimator becomes more accurate as it is fit on more samples, where we separately show cardinalities in the fitting range (top) and beyond the fitting range (bottom).}
    \label{fig:estimator-convergence}
\end{figure}

\clearpage
\subsection{Application to data valuation} \label{app:valuation}

Here we provide additional results applying individualized scaling laws to data valuation. We first recall the definition of the distributional Shapley value \citep{ghorbani2020distributional}, which is one of many data valuation scores based on a data point's marginal contributions \citep{kwon2021beta, wang2022data, li2023robust}. Given a maximum cardinality $k_{\max}$, this method defines the valuation score $\psi(z)$ as a uniform average over the expected marginal contributions $\psi_k(z)$ up to $k = k_{\max}$. In practice, we generally also require a minimum cardinality $k_{\min}$ because models cannot be reliably trained without a reasonable amount of data (see the OpenDataVal package for example, \citealt{jiang2023opendataval}). With this in mind, we can define the valuation score as follows:

\begin{equation*}
    \psi(z) = \frac{1}{k_{\max} - k_{\min} + 1} \sum_{k = k_{\min}}^{k_{\max}} \psi_k(z).
\end{equation*}

Our experiments involve estimating these scores using a conventional Monte Carlo estimator, which averages marginal contributions $\Delta(z, \mcD)$ with dataset sizes sampled uniformly random, and comparing them to estimates derived from our scaling parameters. For a data point $z$ with parameters $c(z)$ and $\alpha(z)$, we can estimate the expected contribution at cardinality $k$ as $\hat{\psi}_k(z) = c(z) / k^{\alpha(z)}$, and we can average this across $k$ to estimate $\hat{\psi}(z)$. We test this approach when the scaling parameters are estimated using both our likelihood-based estimator from \Cref{sec: likelihood estimation} and the amortized estimator from \Cref{sec:amortization}. As the ground truth for the performance metrics, we use Monte Carlo estimates computed using $10000$ sampled marginal contributions, which is chosen to reliably approximate
the exact value.

The expanded versions of our main text results are shown in \Cref{fig:valuation-imdb} for IMDB, \Cref{fig:valuation-cifar} for CIFAR-10, and \Cref{fig:valuation-miniboone} for MiniBooNE. We also show results for an additional case, the adult census dataset in \Cref{fig:valuation-adult} \citep{dua2017uci}. All the results are generated by training logistic regression models. We observe that in all four cases, our scaling law-based estimates converge to the same result as the conventional Monte Carlo estimator, which attests to the scaling law accurately capturing the rate of decay of marginal contributions in practice. We observe that the parametric estimator becomes reliable when we use more than 20 samples, although it only outperforms the Monte Carlo estimator for MiniBooNE and the rank correlation metric for IMDB and adult census. On the other hand, the amortized estimator generally provides the best performance in the noisiest regime with just 10 samples, and it is the best estimator for all numbers of samples with the adult census dataset; however, in the case of CIFAR-10 it is surpassed by the non-amortized estimator for larger numbers of samples.

\begin{figure}[ht]
    \centering
    \includegraphics[width=\linewidth]{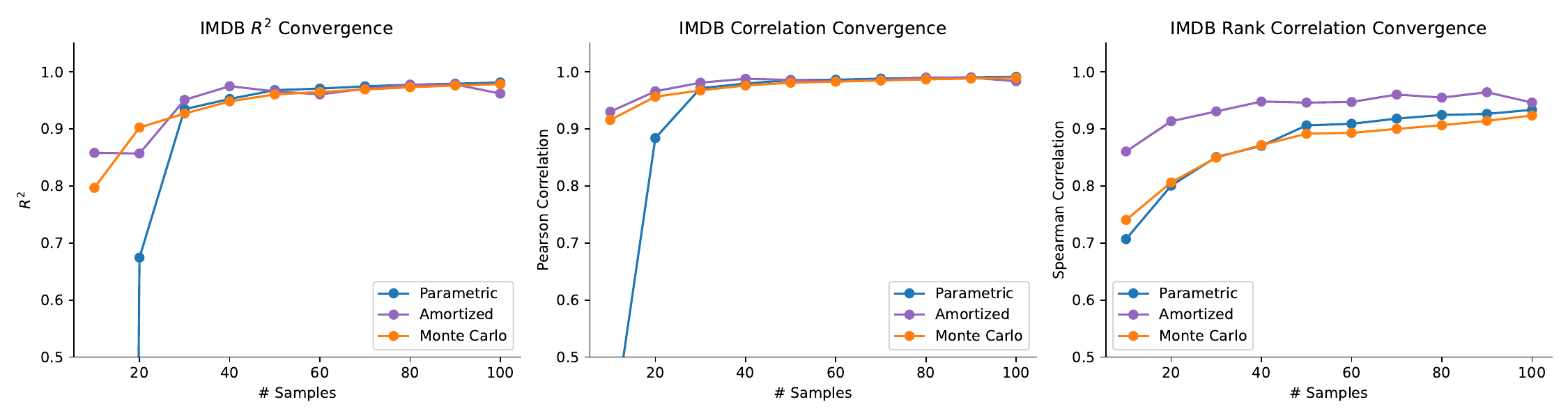}
    \caption{\textbf{Data valuation estimation accuracy for the IMDB dataset with logistic regression}.}
    \label{fig:valuation-imdb}
\end{figure}

\begin{figure}[ht]
    \centering
    \includegraphics[width=\linewidth]{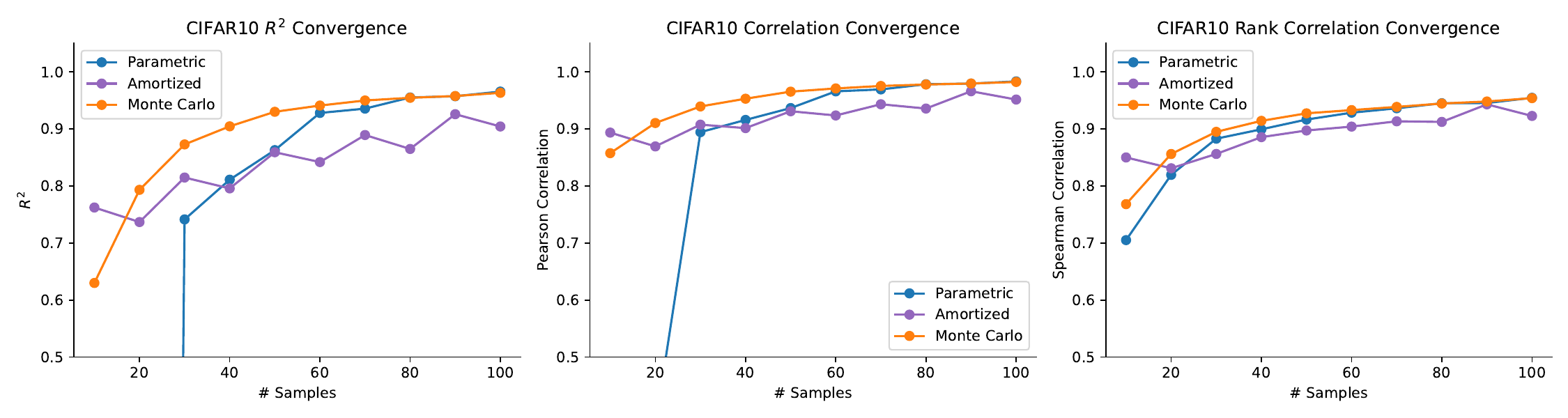}
    \caption{\textbf{Data valuation estimation accuracy for the CIFAR-10 dataset with logistic regression}.}
    \label{fig:valuation-cifar}
\end{figure}

\begin{figure}[ht]
    \centering
    \includegraphics[width=\linewidth]{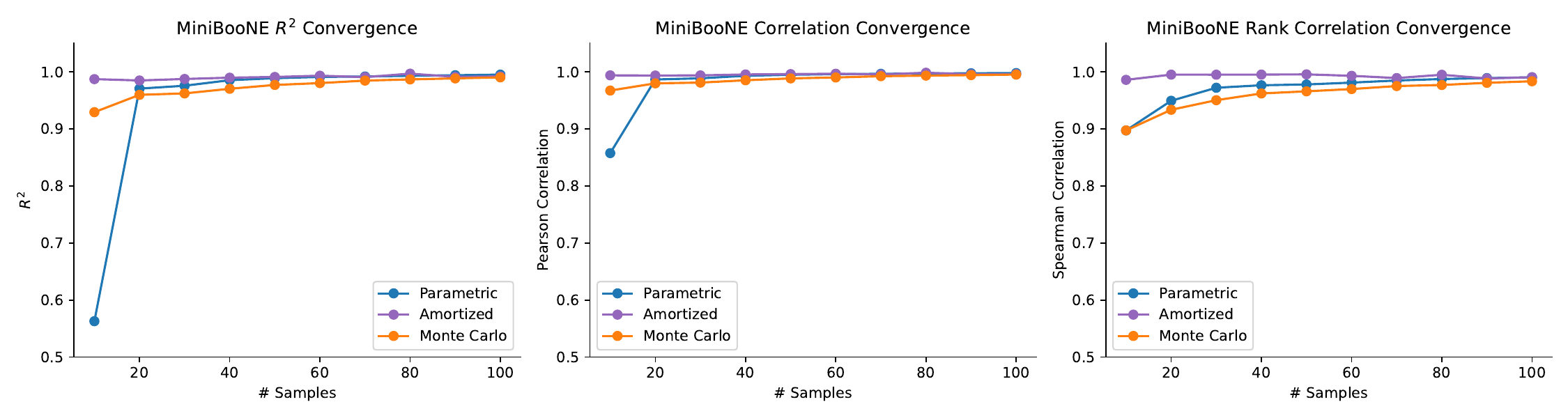}
    \caption{\textbf{Data valuation estimation accuracy for the MiniBooNE dataset with logistic regression}.}
    \label{fig:valuation-miniboone}
\end{figure}

\begin{figure}[ht]
    \centering
    \includegraphics[width=\linewidth]{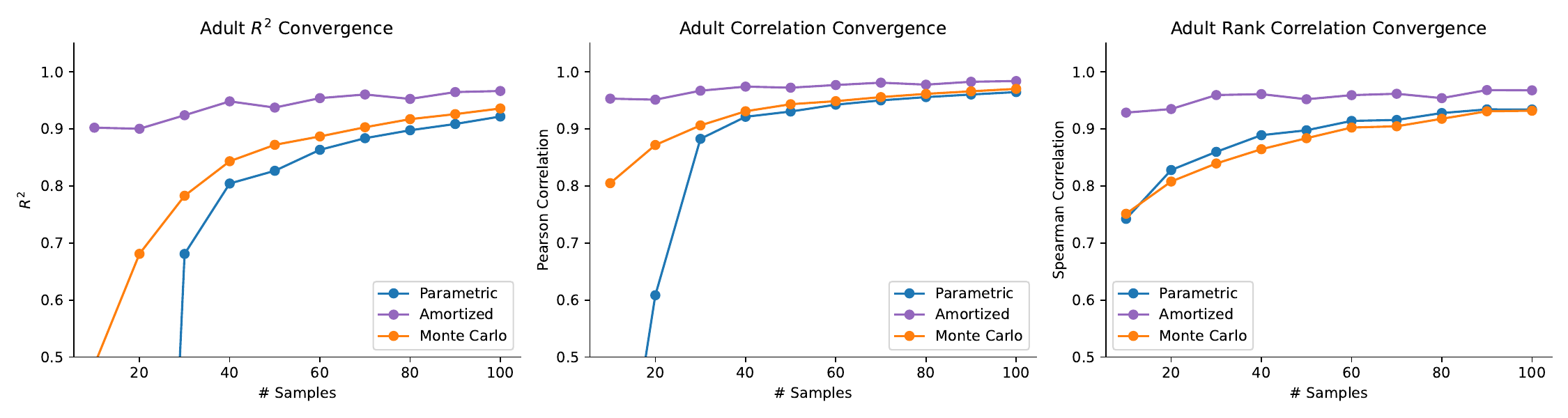}
    \caption{\textbf{Data valuation estimation accuracy for the adult census dataset with logistic regression}.}
    \label{fig:valuation-adult}
\end{figure}

\clearpage
\subsection{Application to point addition} \label{app:point-addition}

Here, we provide additional experimental results on the point addition task. In \Cref{sec: application subset}, we consider adding $20$ points to preceding datasets of size $100$ and $1000$, and here we consider the setting of adding $50$ points to preceding datasets of size $200$ and $2000$. Again, we will predict the expected contribution using the scaling law $\hat\psi_k(z) = c(z) / k^{\alpha(z)}$ with fitted parameters $c(z)$ and $\alpha(z)$ at $k=200$ and $k=2000$ to select the top points for \textit{Scaling 200} and \textit{Scaling 2000}. The results are shown in \Cref{tab: subset small 50} and \Cref{tab: subset large 50}, and the Scaling 200 and Scaling 2000 selections show superior performance on the corresponding smaller and larger datasets,
while performing badly in the other regime. Selections based on the Shapley value have worse performance than scaling law selections based on the dataset size. We notice that in \Cref{tab: subset small 50} there is one setting where random selection achieves the best performance, which may be due to interaction effects where the selected points are similar, so
the contribution from a whole set may not be as beneficial as expected.
In all other settings, we generally observe that random selection has the worst performance. 

\begin{table}[h]
    \caption{\textbf{Accuracy improvement (\%) with 50 points added to preceding datasets of size 200.} The settings are the same as \Cref{tab: subset small}, Scaling 200 achieves the best performance compared to other methods.}
    \label{tab: subset small 50}
    \vspace{0.2cm}
    \centering
    \begin{small}
    \begin{tabular}{cccc}
         \toprule
         Method& MiniBooNE & CIFAR-10 & IMDB \\
         \midrule
        Scaling 2000 & 0.27 $\pm$ 0.95      & 1.51 $\pm$ 0.58     & 0.71 $\pm$ 0.68     \\
        Scaling 200  & 0.33 $\pm$ 0.98      & \textbf{2.04 $\pm$ 0.59}     & \textbf{1.55 $\pm$ 0.61}     \\
        Shapley       & 0.28 $\pm$ 0.99      & 1.50 $\pm$ 0.58     & 1.19 $\pm$ 0.67     \\
        Random    & \textbf{0.44 $\pm$ 0.68}      & 1.16 $\pm$ 0.57     & 0.73 $\pm$ 0.53     \\
        \midrule
        Preceding  & 81.99 $\pm$ 0.97    & 80.51 $\pm$ 0.97    & 81.71 $\pm$ 0.94    \\
         \bottomrule
    \end{tabular}
    \end{small}
\end{table}

\begin{table}[h]
    \caption{\textbf{Accuracy improvement (\%) with 50 points added to preceding datasets of size 2000.} The settings are the same as \Cref{tab: subset small}, Scaling 2000 achieves the best performance compared to other methods.}
    \label{tab: subset large 50}
    \vspace{0.2cm}
    \centering
    \begin{small}
    \begin{tabular}{cccc}
         \toprule
         Method& MiniBooNE & CIFAR-10 & IMDB \\
         \midrule
        Scaling 2000  & \textbf{0.15 $\pm$ 0.33}    & \textbf{0.16 $\pm$ 0.13}    & \textbf{0.09 $\pm$ 0.09}    \\
        Scaling 200 & 0.14 $\pm$ 0.32    & 0.11 $\pm$ 0.10    & 0.06 $\pm$ 0.06    \\
        Shapley        & 0.14 $\pm$ 0.32    & 0.14 $\pm$ 0.13    & 0.08 $\pm$ 0.06    \\
        Random     & 0.01 $\pm$ 0.17    & 0.05 $\pm$ 0.13    & 0.03 $\pm$ 0.09    \\
        \midrule
        Preceding   & 84.51 $\pm$ 0.37   & 86.39 $\pm$ 0.35  & 86.38 $\pm$ 0.24  \\
         \bottomrule
    \end{tabular}
    \end{small}
\end{table}

\end{document}